\documentclass{article} 

\usepackage{iclr2027_conference,times}
\iclrfinalcopy

\usepackage[T1]{fontenc}
\usepackage[utf8]{inputenc}
\usepackage{microtype}          

\usepackage{amsmath}
\usepackage{amssymb}            
\usepackage{amsthm}             
\usepackage{bm}                 
\usepackage{nicefrac}           
\usepackage{xparse}             

\usepackage{graphicx}
\usepackage{float}              
\usepackage{booktabs}           
\usepackage{caption}
\usepackage{subcaption}         
\usepackage{wrapfig}            
\usepackage{needspace}          
\usepackage{xcolor}

\usepackage{algorithm}          
\usepackage{algorithmic}        

\usepackage{url}
\usepackage[hidelinks,hyperfootnotes=false]{hyperref}

\usepackage{enumitem}

\newcommand{\R}{\mathbb{R}}
\newcommand{\E}{\mathbb{E}}
\newcommand{\rd}{\mathrm{d}}

\newcommand{\bz}{\boldsymbol{z}}
\newcommand{\bW}{\boldsymbol{W}}
\newcommand{\bx}{\boldsymbol{x}}
\newcommand{\bX}{\boldsymbol{X}}
\newcommand{\bXstar}{\bX^{*}}

\newcommand{\cov}{\text{Cov}}
\newcommand{\mean}{\boldsymbol{\mu}}
\newcommand{\covmatrix}{\boldsymbol{\Sigma}}

\newcommand{\state}{\boldsymbol{u}}

\newcommand{\modelnoise}{\boldsymbol{\xi}}
\newcommand{\obs}{\boldsymbol{y}} 
\newcommand{\obsoperator}{\boldsymbol{H}}
\newcommand{\obsmatrix}{H}
\newcommand{\obsnoise}{\boldsymbol{\eta}}
\newcommand{\obsdim}{N_{\obs}}
\newcommand{\obscov}{R}

\newcommand{\scorefun}{\bm{s}}
\newcommand{\interpolantscorefun}{\bar{\bm{s}}}

\newcommand{\drift}{\bm{b}}

\newcommand{\bI}{\bm{I}}

\newcommand{\interpolantobs}{\bar{\obs}}
\newcommand{\interpolantobsnoise}{\bar{\obsnoise}}

\newcommand{\fmvel}{\bm{v}}            
\newcommand{\latentnoise}{\bm{\varepsilon}} 
\newcommand{\gdiff}{g}                  
\newcommand{\genscore}{\bm{s}}          
\newcommand{\src}{\bm{\chi}^{\mathrm{s}}} 
\newcommand{\srccov}{\Sigma^{\mathrm{s}}} 
\newcommand{\srcmean}{\mu^{\mathrm{s}}}   
\newcommand{\sigpath}{\sigma}           

\newcommand{\vscoef}{\kappa}            
\newcommand{\gweight}{w}                

\newcommand{\vorticity}{\omega}
\newcommand{\velocity}{\boldsymbol{v}} 

\newcommand{\statedim}{N_{\state}}

\NewDocumentCommand{\dist}{o o g}{%
  \ensuremath{
    p%
    \IfValueT{#1}{_{#1}}%
    \IfValueT{#2}{^{#2}}%
    \IfValueT{#3}{(#3)}%
  }
}

\NewDocumentCommand{\distbar}{o o g}{%
  \ensuremath{
    \bar{p}%
    \IfValueT{#1}{_{#1}}%
    \IfValueT{#2}{^{#2}}%
    \IfValueT{#3}{(#3)}%
  }
}

\NewDocumentCommand{\conddist}{o o m m}{%
  \ensuremath{
    p%
    \IfValueT{#1}{_{#1}}%
    \IfValueT{#2}{^{#2}}%
    \!\left(#3 \,\middle|\, #4\right)%
  }
}
\NewDocumentCommand{\conddistbar}{o o m m}{%
  \ensuremath{
    \bar{p}%
    \IfValueT{#1}{_{#1}}%
    \IfValueT{#2}{^{#2}}%
    \!\left(#3 \,\middle|\, #4\right)%
  }
}

\NewDocumentCommand{\analysisdist}{o m}{%
\ensuremath{
      \IfValueTF{#1}{%
       \: p^{a}\!\left(#2 \,\middle|\, #1 \right)%
      }{%
        \: p^{a}\!\left(#2 \,\middle|\, \obs\right)%
      }%
  }
}

\NewDocumentCommand{\backgrounddist}{g}{%
  \: p^b\IfValueT{#1}{(#1)}%
}

\NewDocumentCommand{\obsdist}{g}{%
  \: p^{\obs}\IfValueT{#1}{(#1)}%
}

\NewDocumentCommand{\pffdist}{o g}{%
  \ensuremath{
    \IfValueTF{#1}{p_{#1}}{p_{\tau}}%
    \IfValueT{#2}{(#2)}%
  }
}

\hypersetup{
  pdftitle={A Unified Framework for Bayesian Data Assimilation with Generative Models and Observation Interpolants},
  pdfauthor={Nikolaj T. M{\"u}cke and Benjamin Sanderse}
}

\title{A Unified Framework for Bayesian Data Assimilation with Generative Models and Observation Interpolants}

\author{
Nikolaj T. M{\"u}cke$^{*,1,2}$, Benjamin Sanderse$^{1,3}$ \\
\small $^1$Scientific Computing, Centrum Wiskunde \& Informatica \\
\small $^2$Faculty of Civil Engineering and Geosciences, Delft University of Technology \\
\small $^3$Centre for Analysis, Scientific Computing and Applications, Eindhoven University of Technology \\
\texttt{N.T.Mucke-1@tudelft.nl}, \texttt{b.sanderse@cwi.nl}
}

\theoremstyle{plain}
\newtheorem{theorem}{Theorem}[section]
\newtheorem{proposition}[theorem]{Proposition}
\newtheorem{lemma}[theorem]{Lemma}

\theoremstyle{definition}

\theoremstyle{remark}

\begin{document}

\maketitle

\begingroup
\renewcommand\thefootnote{*}
\footnotetext{Corresponding author.}
\endgroup

\begin{abstract}
Bayesian data assimilation combines model forecasts with noisy observations, but sampling high-dimensional, non-Gaussian posteriors remains challenging. We introduce an observation-interpolant framework that turns pretrained stochastic interpolant, flow matching, and diffusion models into posterior samplers without retraining. Conditioning the interpolant path on observations yields a shared likelihood-score correction to the drift or velocity, unifying stochastic and deterministic posterior sampling. The resulting SDEs and ODEs sample the exact posterior when the intermediate likelihood score is known. For practical computation, we approximate this score using a closed-form Gaussian surrogate with a bias-corrected mean and covariance inflated by the model’s source covariance. Jacobian-free and ensemble-shared approximations make the method tractable in high dimensions. We evaluate the framework on linear–Gaussian dynamics, stochastic two-dimensional Navier–Stokes, and urban airflow with up to $\mathcal{O}(10^4)$ degrees of freedom.
\end{abstract}

\section{Introduction}\label{sec:introduction}
Data assimilation (DA) estimates unknown states of dynamical systems by combining prior model predictions with noisy observations. Bayesian DA combines these sources of information in a posterior distribution, which can be approximated by an ensemble of samples. Ensemble Kalman filters provide an efficient approximation but rely on Gaussian assumptions, whereas particle filters are more general but typically require many more samples~\citep{evensen_data_2022}. Neural network surrogates can be used to accelerate the forward model~\citep{mucke_deep_2024, cheng_generalised_2022, chen_reduced-order_2023}, but the learned surrogates themselves are typically deterministic and therefore require a priori model-error distributions that are hard to specify for high-dimensional and nonlinear stochastic systems.

Generative models offer an alternative, by learning to sample directly from a distribution of interest. Two flow-based families dominate: stochastic interpolants (SIs)~\citep{albergo_stochastic_2023, albergo_building_2023, chen_probabilistic_2024}, whose probability paths admit both ODE and SDE samplers, and flow matching (FM)~\citep{lipman_flow_2023}, which is typically sampled through a deterministic ordinary differential equation (ODE). Denoising diffusion~\citep{ho_denoising_2020, song_score-based_2021} corresponds to SDE sampling of an FM-type model. In DA, and posterior sampling in general, they are trained on the prior and modified at sampling time to incorporate observations. All three are increasingly applied to physical forecasting, yet methods for turning them into \emph{posterior} samplers have been developed model by model, each tied to whether the generator is stochastic or deterministic.

\textbf{A unified view.} For data assimilation, we view stochastic interpolants, flow matching, and denoising diffusion as different samplers within one posterior-sampling framework, rather than distinct methods. Each can be described by an interpolant that connects a tractable source distribution to the transition prior $\conddist{\bx_1}{\bx_0}$~\citep{albergo_stochastic_2023, lipman_flow_2023, ma_sit_2024, holderrieth_generator_2024}. Given a pretrained model that samples from this prior, we modify its sampling dynamics to target $\conddist{\bx_1}{\obs,\bx_0}$ without retraining. Conditioning the interpolant on observations gives a common likelihood-score correction to its velocity or drift (Proposition~\ref{prop:conditional_path}). Different choices of diffusion coefficient yield deterministic or stochastic samplers. Adjusting the drift accordingly preserves the conditioned interpolant's distribution at each pseudo-time, so all resulting samplers target the same posterior. In practice, one closed-form Gaussian surrogate approximates the likelihood score for all three. The samplers differ only in their diffusion coefficient and likelihood-score weighting (Figure~\ref{fig:visual_abstract}).

\begin{figure}
    \centering
    \includegraphics[width=1.0\linewidth]{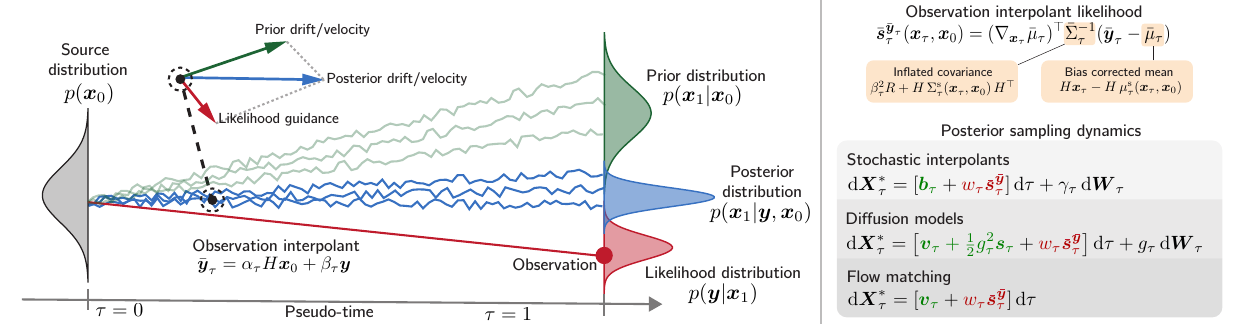}
    \caption{\textbf{Unified posterior sampling for stochastic interpolants, flow matching, and diffusion models.} \emph{Left:} interpolating the observation yields closed-form guidance that steers the pretrained prior flow $p(\bm x_1\mid\bm x_0)$ toward the posterior $p(\bm x_1\mid\bm y,\bm x_0)$, without retraining. \emph{Right:} the weighted guidance $\gweight_\tau$ shifts the prior drift to the posterior drift.}
\label{fig:visual_abstract}
\end{figure}

\textbf{Contributions.} Our contributions are summarized as follows:
\begin{itemize}
    \item \textbf{Unified posterior-sampler family.} One guidance term with weight $\gweight_\tau=\vscoef_\tau+\tfrac12\gdiff_\tau^2$ turns any interpolant-based generative model into a family of posterior samplers.
    \item \textbf{Observation-interpolant likelihood surrogate.} We interpolate the observation with the same schedules as the state and derive its exact conditional mean and covariance, including source-noise bias and covariance inflation. A Gaussian approximation then gives a common likelihood-score surrogate for all three samplers.
    \item \textbf{Scalable covariance approximations.} A Jacobian-free covariance and an ensemble-shared Jacobian make the sparse-observation regime feasible at field scale.
\end{itemize}

\section{Related work}\label{sec:related_work}

\textbf{Flow-based generative models in physics applications.} Flow-based generative models are increasingly used for probabilistic physical forecasting, sampling the next state given the current one via noise around neural-operator predictions~\citep{lippe_pde-refiner_2023}, latent autoregressive dynamics with a diffusion decoder~\citep{gao_generative_2024}, or direct conditional sampling with denoising diffusion~\citep{kohl_benchmarking_2024}, flow matching~\citep{fotiadis_stochastic_2024}, and stochastic interpolants~\citep{chen_probabilistic_2024, mucke_physics-aware_2025}.

\textbf{Posterior sampling and data assimilation with generative models.} Generative models are increasingly used to solve inverse problems~\citep{daras_survey_2024}. A common approach is to train a flow-based model to sample from the prior and, at inference time, condition its dynamics on sparse and noisy observations by modifying the velocity or drift through the score decomposition, $\nabla\log p_\tau(\bx\mid\obs)=\nabla\log p_\tau(\bx)+\nabla\log p_\tau(\obs\mid\bx)$. This principle underlies posterior-sampling methods based on diffusion models~\citep{song_score-based_2021,chung_diffusion_2023,bao_ensemble_2024,qu_deep_2024,rozet_score-based_2023}, flow matching~\citep{pokle_training-free_2024,yan_fig_2024,martin_pnp-flow_2025}, and stochastic interpolants~\citep{chen_flowdas_2025}. These methods differ mainly in how they approximate the likelihood score. Most use a Tweedie-type estimate of the terminal sample and differentiate through it to obtain the score~\citep{song_score-based_2021,rozet_score-based_2023,qu_deep_2024,pokle_training-free_2024,martin_pnp-flow_2025}, while others combine related approximations with Monte Carlo estimation~\citep{chen_flowdas_2025}. They also differ in their treatment of the likelihood covariance, using either the original covariance~\citep{chung_diffusion_2023} or an inflated one~\citep{song_pseudoinverse_2023}. Closest to our work is~\citet{yan_fig_2024}, who interpolate the observations within a flow-matching formulation. In contrast, we develop a unified framework for diffusion models, flow matching, and stochastic interpolants. We interpolate the observations using the same schedules as the state, rather than evaluating the original likelihood at a single estimated terminal state. Deriving the conditional source-noise moments then gives a bias-corrected mean and inflated covariance for a common Gaussian likelihood surrogate across all three samplers.

\section{Preliminaries}\label{sec:preliminaries}

\paragraph{Bayesian inverse problems and data assimilation}
For $n=1,\ldots,N$, consider the Markov model~\citep{carrassi_data_2018, evensen_data_2022}:
\begin{align}\label{eq:data_assimilation_problem}
    \state^{n} \sim \conddist{\state^{n}}{\state^{n-1}},  \quad 
    \obs^{n} = \obsmatrix\state^{n} + \obsnoise^{n},
    \qquad \obsnoise^{n}\sim \dist[\obsnoise],
\end{align}
where $\state^n\in\R^{\statedim}$ is the state, $\obs^n\in\R^{\obsdim}$ the observation, and $\obsmatrix\in \R^{\obsdim\times\statedim}$ the observation matrix. The transition prior and noise-induced likelihood define the \emph{one-step posterior}:
\begin{align}\label{eq:posterior_distribution}
    \conddist{\state^{n}}{\obs^n,\state^{n-1}}
    = \frac{\conddist{\obs^{n}}{\state^{n}}\conddist{\state^{n}}{\state^{n-1}}}
           {\conddist{\obs^{n}}{\state^{n-1}}}.
\end{align}
Feeding each posterior sample into the next step gives an autoregressive assimilation procedure.

\paragraph{Interpolation-based generative modelling}
\label{subsec:flow_models}

To model the transition prior, let $\bx_0=\state^{n-1}$ be the conditioning state and $\bx_1=\state^n$ the target. We use the general interpolant
\begin{align}
\label{eq:general_interpolant}
    \bx_\tau
    =\alpha_\tau\bx_0+\beta_\tau\bx_1+\sigpath_\tau\latentnoise,
    \qquad
    \latentnoise\sim\mathcal{N}(0,\bI),
    \qquad \tau\in[0,1],
\end{align}
where $\tau$ is pseudo-time, $\latentnoise$ is independent of $(\bx_0,\bx_1)$, and
$\beta_0=0$, $\beta_1=1$, and $\alpha_1=\sigpath_1=0$.
The schedule $\sigpath_\tau$ specifies the noise standard deviation conditional on both endpoints.

\textit{Flow matching (FM)} typically sets $\alpha_\tau\equiv0$ and $\sigpath_0=1$, connecting a standard Gaussian source to $\conddist{\bx_1}{\bx_0}$.
Its marginal velocity is learned through~\citep{lipman_flow_2023,liu_flow_2023}
\begin{align}
\label{eq:velocity_loss}
    \mathcal{L}_{\mathrm{vel}}(\theta)
    =
    \E_{\tau,\bx_0,\bx_1,\latentnoise}\left[
        \left\|
            \fmvel_\tau^\theta(\bx_\tau,\bx_0)
            -\left(\dot\beta_\tau\bx_1+\dot\sigpath_\tau\latentnoise\right)
        \right\|^2
    \right],
\end{align}
using data pairs $(\bx_0,\bx_1)$, $\tau\sim\mathcal{U}(0,1)$, and Gaussian noise.
Sampling then consists of drawing $\bX_0$ from the source and integrating the ODE from $\tau=0$ to $\tau=1$:
\begin{align}
\label{eq:fm_ode}
    \text{FM-ODE:}\qquad
    \rd\bX_\tau
    =\fmvel_\tau(\bX_\tau,\bx_0)\,\rd\tau,
    \qquad
    \bX_0\sim\mathcal{N}(0,\bI).
\end{align}
With the exact velocity and well-posed dynamics, $\bX_1\sim\conddist{\bx_1}{\bx_0}$.
We omit the learned parameters $\theta$ below.

\textit{Diffusion models (DM)} typically learn the marginal score
$\genscore_{\tau}(\bx,\bx_0):=\nabla_{\bx}\log\conddist[\tau]{\bx}{\bx_0}$
on Gaussian-source paths~\citep{ho_denoising_2020,song_score-based_2021}. However, for the general interpolant~\eqref{eq:general_interpolant}, the score and velocity satisfy
\begin{align}
\label{eq:general_score_velocity}
    \fmvel_\tau(\bx,\bx_0)
    &=
    \frac{\dot\beta_\tau}{\beta_\tau}\bx
    +\left(
        \dot\alpha_\tau-\frac{\dot\beta_\tau}{\beta_\tau}\alpha_\tau
    \right)\bx_0
    +\vscoef_\tau\genscore_\tau(\bx,\bx_0),
    \qquad
    \vscoef_\tau
    :=
    \frac{\dot\beta_\tau}{\beta_\tau}\sigpath_\tau^2
    -\sigpath_\tau\dot\sigpath_\tau,
\end{align}
for differentiable schedules with $\beta_\tau,\sigpath_\tau>0$ on $(0,1)$ (Appendix~\ref{appendix:score_velocity}).
Thus, FM and DM are equivalent, and a velocity trained via flow matching can be used to sample via the SDE:
\begin{align}
\label{eq:dm_sde}
    \text{DM-SDE:}\quad
    \rd\bX_\tau
    &=
    \left[
        \fmvel_\tau(\bX_\tau,\bx_0)
        +\tfrac12\gdiff_\tau^2\genscore_\tau(\bX_\tau,\bx_0)
    \right]\rd\tau
    +\gdiff_\tau\,\rd\bW_\tau,
    \quad
    \bX_0\sim\mathcal{N}(0,\bI).
\end{align}
Here, $\bW_\tau$ is standard Brownian motion, and $\gdiff_\tau>0$ is the diffusion term.
Unlike $\sigpath_\tau$, $\gdiff_\tau$ controls sampling dynamics rather than interpolant marginals; setting $\gdiff_\tau=0$ recovers the FM ODE. Our labels FM-ODE and DM-SDE distinguish these sampling choices, independently of which field was trained.

\textit{Stochastic interpolants (SI)} can also connect the conditioning state directly to the transition prior.
Following~\citet{chen_probabilistic_2024}, we set $\alpha_0=1$ and $\sigpath_0=0$ and replace the Gaussian noise in~\eqref{eq:general_interpolant} by the Brownian source $\gamma_\tau\bW_\tau$.
Since $\bW_\tau\sim\mathcal{N}(0,\tau\bI)$, this preserves the interpolant's one-time marginals when $\sigpath_\tau=\gamma_\tau\sqrt{\tau}$. In this setting, one can learn the SDE drift directly via:
\begin{align}
\label{eq:si_drift_loss}
    \mathcal{L}_{\mathrm{drift}}(\theta)
    =
    \E_{\tau,\bx_0,\bx_1,\latentnoise}\left[
        \left\|
            \drift_\tau^\theta(\bx_\tau,\bx_0)
            -
            \left(
                \dot\alpha_\tau\bx_0
                +\dot\beta_\tau\bx_1
                +\dot\gamma_\tau\sqrt{\tau}\,\latentnoise
            \right)
        \right\|^2
    \right].
\end{align}
Here, $\dot\gamma_\tau\sqrt{\tau}
=\dot\sigpath_\tau-\sigpath_\tau/(2\tau)$, so this objective is fully determined by the schedules in~\eqref{eq:general_interpolant}.
Sampling proceeds by integrating
\begin{align}
\label{eq:si_sde}
    \text{SI-SDE:}\qquad
    \rd\bX_\tau
    =\drift_\tau(\bX_\tau,\bx_0)\,\rd\tau
    +\gamma_\tau\,\rd\bW_\tau,
    \qquad
    \bX_0=\bx_0,
\end{align}
from $\tau=0$ to $\tau=1$. Note that $\gamma_\tau$ is natively chosen as the diffusion term. The SI drift can be decomposed into velocity and score, $\drift_\tau=\fmvel_\tau+\tfrac12\gamma_\tau^2\genscore_\tau$.
It therefore shares the velocity--score structure of the preceding samplers.

\section{Unified Posterior Sampler}
\label{subsec:posterior_dynamics}

We use the general interpolant of Section~\ref{subsec:flow_models} to sample the one-step posterior $\conddist{\bx_1}{\obs,\bx_0}$ without retraining the prior model by conditioning the drift/velocity on the observations.

Define the intermediate likelihood
\begin{align}\label{eq:likelihood_tilt}
    \Phi_\tau^{\obs}(\bx,\bx_0)
    := \conddist{\obs}{\bX_\tau=\bx,\bx_0}
    = \E\!\left[\conddist{\obs}{\bx_1}\,\middle|\,\bX_\tau=\bx,\bx_0\right],
\end{align}
and its score $\scorefun_\tau^{\obs}:=\nabla_{\bx}\log\Phi_\tau^{\obs}$.
Conditioning preserves the interpolant's schedules and anchor, while shifting its score and velocity by $\scorefun_\tau^{\obs}$ and $\vscoef_\tau\scorefun_\tau^{\obs}$, respectively (Proposition~\ref{prop:conditional_path}).
Applying marginal-preserving ODE/SDE dynamics to this conditioned path~\citep{albergo_stochastic_2023, ma_sit_2024} yields the following unified posterior sampler:

\begin{theorem}[Unified posterior sampler]\label{theorem:unified_posterior}
Let the prior model define an interpolant path with velocity $\fmvel_\tau$ and score $\genscore_\tau$ whose terminal law is $\conddist{\bx_1}{\bx_0}$, and let $\Phi_\tau^{\obs}$ be the intermediate likelihood \eqref{eq:likelihood_tilt}. Assume the posterior-path density $p_\tau^{*}$ is positive and $C^{2}$ on $(0,1)\times\R^{\statedim}$. Let $\gdiff_\tau\ge0$ be a chosen sampler diffusion, equal to $\gamma_\tau$ for the native SI sampler and zero for the FM ODE, measurable with $\gdiff\in L^2([0,1])$, and assume that the SDE below and its Fokker--Planck equation are well posed with unique solutions for the stated initial law. Then
\begin{align}\label{eq:unified_posterior_drift}
    \rd\bXstar_\tau
    = \underbrace{\left[\fmvel_\tau(\bXstar_\tau,\bx_0) + \tfrac12\gdiff_\tau^2\,\genscore_\tau(\bXstar_\tau,\bx_0)\right]}_{\text{prior drift }\drift_\tau}\rd\tau
    + \underbrace{\big(\vscoef_\tau + \tfrac12\gdiff_\tau^2\big)}_{=:\,\gweight_\tau}\scorefun_\tau^{\obs}(\bXstar_\tau,\bx_0)\,\rd\tau
    + \gdiff_\tau\,\rd\bW_\tau ,
\end{align}
initialised from the source law $p_0^{\obs}=p_0$ (unchanged by conditioning because $\beta_0=0$), has marginals $p_\tau^{\obs}(\bx\mid\bx_0)\propto\Phi_\tau^{\obs}(\bx,\bx_0)\,p_\tau(\bx)$ for all $\tau\in[0,1]$, and in particular generates $\bXstar_1\sim\conddist{\bx_1}{\obs,\bx_0}$.
\end{theorem}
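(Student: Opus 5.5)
The plan is to show that the tilted density $p_\tau^{\obs}(\bx\mid\bx_0)\propto\Phi_\tau^{\obs}(\bx,\bx_0)\,p_\tau(\bx)$ solves the Fokker--Planck equation of \eqref{eq:unified_posterior_drift}. The assumed uniqueness then identifies it with the law of $\bXstar_\tau$.

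\textbf{Step 1: the conditioned path is an interpolant path.} By Bayes' rule, $p_\tau^{\obs}$ is exactly the marginal law of $\bX_\tau$ in \eqref{eq:general_interpolant} when $(\bx_1,\latentnoise)$ are drawn conditionally on $\obs$. Since $\obs$ depends on the path only through $\bx_1$, and $\latentnoise$ is independent of $(\bx_0,\bx_1)$, conditioning replaces $\conddist{\bx_1}{\bx_0}$ by $\conddist{\bx_1}{\obs,\bx_0}$. The schedules, the anchor $\bx_0$ and the noise law are unchanged. In particular, at $\tau=0$ we have $\beta_0=0$, so $\bX_0$ does not depend on $\bx_1$ and $p_0^{\obs}=p_0$. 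At $\tau=1$ we have $\alpha_1=\sigpath_1=0$, so $p_1^{\obs}=\conddist{\bx_1}{\obs,\bx_0}$. This is the content of Proposition~\ref{prop:conditional_path}, which I invoke.

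\textbf{Step 2: the conditioned score and velocity.} The score of the conditioned path is $\genscore_\tau+\scorefun_\tau^{\obs}$, because $\log p_\tau^{\obs}=\log\Phi_\tau^{\obs}+\log p_\tau+\text{const}$. The score--velocity identity \eqref{eq:general_score_velocity} holds for any interpolant with these schedules, and its first two terms do not depend on the endpoint law. Applying it to the conditioned path gives the conditioned velocity $\fmvel_\tau^{\obs}=\fmvel_\tau+\vscoef_\tau\scorefun_\tau^{\obs}$. This velocity satisfies the continuity equation $\partial_\tau p_\tau^{\obs}+\nabla\cdot(p_\tau^{\obs}\fmvel_\tau^{\obs})=0$. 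The standard derivation of this equation differentiates $\E[\varphi(\bX_\tau)]$ for test functions $\varphi$ and uses $\fmvel^{\obs}_\tau(\bx)=\E[\dot\alpha_\tau\bx_0+\dot\beta_\tau\bx_1+\dot\sigpath_\tau\latentnoise\mid\bX_\tau=\bx,\obs]$. The positivity and $C^2$ assumptions on $p_\tau^{*}$ make the score well defined and justify these manipulations on $(0,1)$.

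\textbf{Step 3: add marginal-preserving diffusion.} For any $\gdiff_\tau\ge0$, write $p\equiv p^{\obs}_\tau$. The identity $\tfrac12\gdiff_\tau^2\Delta p=\nabla\cdot\big(p\,\tfrac12\gdiff_\tau^2\nabla\log p\big)$ lets me rewrite the continuity equation as the Fokker--Planck equation
\begin{align*}
\partial_\tau p+\nabla\cdot\Big(p\big[\fmvel_\tau^{\obs}+\tfrac12\gdiff_\tau^2(\genscore_\tau+\scorefun_\tau^{\obs})\big]\Big)=\tfrac12\gdiff_\tau^2\Delta p .
\end{align*}
The drift here equals $\fmvel_\tau+\tfrac12\gdiff_\tau^2\genscore_\tau+(\vscoef_\tau+\tfrac12\gdiff_\tau^2)\scorefun_\tau^{\obs}$, which is precisely the drift in \eqref{eq:unified_posterior_drift}. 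The special cases follow directly: $\gdiff_\tau=0$ gives the guided FM-ODE, and $\gdiff_\tau=\gamma_\tau$ recovers the SI-SDE prior drift $\drift_\tau$.

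\textbf{Step 4: uniqueness and endpoints.} By assumption, the Fokker--Planck equation with initial law $p_0$ has a unique solution, so the law of $\bXstar_\tau$ equals $p^{\obs}_\tau$ for all $\tau$. For the native SI case the initial law $\delta_{\bx_0}$ is degenerate. I would therefore first argue on $[\epsilon,1]$ and then let $\epsilon\to0$, using continuity of the marginals; the condition $\gdiff\in L^2$ ensures the stochastic integral is finite.

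\textbf{Main obstacle.} The hardest part is the endpoints. At $\tau\to1$ the score and $\vscoef_\tau$ may blow up, since $\sigpath_\tau\to0$. At $\tau=0$ the SI source is a Dirac mass. I would handle both by working on compact subintervals $[\epsilon,1-\epsilon]$, where all quantities are regular, and passing to the limit by weak continuity of $\tau\mapsto p_\tau^{\obs}$, which holds because the interpolant is continuous in $\tau$. The remaining steps are algebraic once Proposition~\ref{prop:conditional_path} is available.
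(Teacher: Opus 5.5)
Your proposal is correct and follows essentially the same route as the paper: Proposition~\ref{prop:conditional_path} gives the conditioned path with shifted score and velocity, the marginal-preserving Fokker--Planck cancellation (which you re-derive inline rather than citing Proposition~\ref{prop:marginal_preservation}) yields the stated drift, and the assumed uniqueness identifies the marginals. Your endpoint discussion is more careful than the paper's ``interpreted weakly'' remark, but phrase it as checking that $\tau\mapsto p_\tau^{\obs}$ is a weak solution on $[0,1]$ attaining the initial datum $p_0$, so that the assumed uniqueness applies directly; restarting the SDE at $\tau=\epsilon$ would presuppose its law there, which is what you are trying to prove.
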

See Appendix~\ref{appendix:proof_conditioning} for a proof. While analogous results exist for individual model families, to our knowledge this is the first unified posterior-sampling result explicitly covering stochastic interpolants, flow matching, and diffusion models. As a result, given a trained drift, $\drift_\tau$, velocity, $\fmvel_\tau$, or score $\scorefun_\tau$, one can sample from the posterior distribution via one of the following ODEs/SDEs:
\begin{align*}
    \text{SI-SDE }(\gdiff_\tau=\gamma_\tau):\quad
    \rd\bXstar_\tau
    &= \left[\drift_\tau+\gweight_\tau\scorefun_\tau^{\obs}\right]\rd\tau
       +\gamma_\tau\,\rd\bW_\tau,
    &\quad \bXstar_0 &= \bx_0, \\
    \text{DM-SDE }(\gdiff_\tau>0):\quad
    \rd\bXstar_\tau
    &= \left[\fmvel_\tau+\tfrac12\gdiff_\tau^2\genscore_\tau
       +\gweight_\tau\scorefun_\tau^{\obs}\right]\rd\tau
       +\gdiff_\tau\,\rd\bW_\tau,
    & \bXstar_0 &\sim \mathcal{N}(0,\bI), \\
    \text{FM-ODE }(\gdiff_\tau=0):\quad
    \rd\bXstar_\tau
    &= \left[\fmvel_\tau+\vscoef_\tau\scorefun_\tau^{\obs}\right]\rd\tau,
    & \bXstar_0 &\sim \mathcal{N}(0,\bI).
\end{align*}

\subsection{A closed-form likelihood score from observation interpolation}
\label{subsec:obs_interpolation}

The posterior drift in Eq.~\eqref{eq:unified_posterior_drift} depends on the likelihood score $\scorefun_\tau^{\obs}$ at every intermediate state, yet the conditional expectation that defines it in Eq.~\eqref{eq:likelihood_tilt} is intractable. Estimating it by Monte Carlo or by differentiating through simulated trajectories becomes prohibitive when repeated at every sampler step. We remove this barrier by interpolating the observation alongside the state, so that at every pseudo-time the sampler conditions on an observation matched to its current state: 
\begin{align} \label{eq:interpolated_observation}
    \interpolantobs_\tau(\bx_0, \obs) = \alpha_\tau \obsmatrix\bx_0 + \beta_\tau \obs,
    \qquad \interpolantobs_0 = \alpha_0\obsmatrix\bx_0,\quad \interpolantobs_1 = \obs.
\end{align}
which reduces to $\beta_\tau\obs$ for FM and DM ($\alpha_\tau\equiv0$). This scales the measured endpoint contribution by $\beta_\tau$ and includes the source contribution $\alpha_\tau\obsmatrix\bx_0$. We can then replace the likelihood score from ~\eqref{eq:unified_posterior_drift} with the interpolated likelihood score, $\interpolantscorefun_\tau^{\interpolantobs_\tau}:=\nabla_{\bx_\tau}\log\bar{\Phi}_{\tau}^{\interpolantobs_\tau} = \nabla_{\bx_\tau}\log\conddist{\interpolantobs}{\bX_\tau=\bx,\bx_0}$.

Linearity of $\obsmatrix$ lets us write $\interpolantobs_\tau$ as $\obsmatrix\bx_\tau$ plus a residual containing both observation and source noise. Its exact conditional moments follow from the prior score and its derivative. They differ from the naive mean $\obsmatrix\bx_\tau$ and covariance $\beta_\tau^2\obscov$ because the source noise is correlated with the intermediate state:
\begin{lemma}[Interpolated-observation likelihood]\label{lem:interpolated_likelihood}
Let $\obsoperator$ be linear with matrix $\obsmatrix$ and $\obsnoise \sim \mathcal{N}(0,\obscov)$ independent of $(\bx_1,\latentnoise)$ given $\bx_0$. Then, writing the zero-mean source noise process as $\src_\tau := \sigpath_\tau\latentnoise$ reveals:
\begin{equation}\label{eq:ybar_decomp}
  \interpolantobs_\tau = \obsmatrix\bx_\tau + \interpolantobsnoise_\tau,
  \qquad
  \interpolantobsnoise_\tau := \beta_\tau\obsnoise - \obsmatrix\src_\tau,
\end{equation}
and the conditional mean and covariance of $\conddist{\interpolantobs_\tau}{\bx_0, \bx_\tau}$ are
\begin{align}\label{eq:likelihood_moments}
  \bar{\mu}_\tau(\bx_\tau, \bx_0)
  = \obsmatrix\bx_\tau - \obsmatrix\,\srcmean_\tau(\bx_\tau,\bx_0),
      \quad
  \bar{\Sigma}_{\tau}(\bx_\tau, \bx_0)
  = \beta_\tau^2\obscov + \obsmatrix\,\srccov_\tau(\bx_\tau,\bx_0)\,\obsmatrix^\top,
\end{align}
with source moments $\srcmean_\tau := \E[\src_\tau\mid\bx_\tau,\bx_0]$ and $\srccov_\tau := \cov(\src_\tau\mid\bx_\tau,\bx_0)$.
\end{lemma}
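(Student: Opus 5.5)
The plan has three steps: an algebraic identity for the decomposition, then the two conditional moments computed from it.

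\textbf{Decomposition.} Since $\obsoperator$ is linear with matrix $\obsmatrix$ and $\obs=\obsmatrix\bx_1+\obsnoise$, I substitute into \eqref{eq:interpolated_observation}:
\[
\interpolantobs_\tau=\alpha_\tau\obsmatrix\bx_0+\beta_\tau\obsmatrix\bx_1+\beta_\tau\obsnoise .
\]
Applying $\obsmatrix$ to the interpolant \eqref{eq:general_interpolant} gives $\obsmatrix\bx_\tau=\alpha_\tau\obsmatrix\bx_0+\beta_\tau\obsmatrix\bx_1+\obsmatrix\src_\tau$, with $\src_\tau=\sigpath_\tau\latentnoise$. Subtracting the two expressions yields $\interpolantobs_\tau-\obsmatrix\bx_\tau=\beta_\tau\obsnoise-\obsmatrix\src_\tau$, which is \eqref{eq:ybar_decomp}. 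For the SI case with Brownian source, $\src_\tau=\gamma_\tau\bW_\tau$ plays the same role, since only one-time marginals enter.

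\textbf{Conditional mean.} Conditional on $(\bx_\tau,\bx_0)$, the term $\obsmatrix\bx_\tau$ is deterministic. Hence
\[
\bar\mu_\tau=\obsmatrix\bx_\tau+\beta_\tau\E[\obsnoise\mid\bx_\tau,\bx_0]-\obsmatrix\,\E[\src_\tau\mid\bx_\tau,\bx_0].
\]
The key observation is that $\bx_\tau$ is a measurable function of $(\bx_0,\bx_1,\latentnoise)$. The hypothesis that $\obsnoise$ is independent of $(\bx_1,\latentnoise)$ given $\bx_0$ therefore implies that $\obsnoise$ is conditionally independent of $\bx_\tau$ given $\bx_0$. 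Consequently $\E[\obsnoise\mid\bx_\tau,\bx_0]=\E[\obsnoise]=0$, which gives the stated mean $\bar\mu_\tau=\obsmatrix\bx_\tau-\obsmatrix\srcmean_\tau$.

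\textbf{Conditional covariance.} Expanding the covariance of the residual gives
\[
\bar\Sigma_\tau=\beta_\tau^2\cov(\obsnoise\mid\cdot)+\obsmatrix\srccov_\tau\obsmatrix^\top-\beta_\tau\bigl(\cov(\obsnoise,\src_\tau\mid\cdot)\obsmatrix^\top+\text{transpose}\bigr).
\]
By the same conditional independence, $\cov(\obsnoise\mid\bx_\tau,\bx_0)=\obscov$. It remains to show that the cross-covariance vanishes.

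This cross-term is the main point to check. Here $\src_\tau$ is a function of $(\bx_0,\bx_1,\latentnoise)$, and $\obsnoise$ is independent of that triple given $\bx_0$. It follows that $\obsnoise$ is conditionally independent of the pair $(\src_\tau,\bx_\tau)$ given $\bx_0$, and hence $\obsnoise$ is independent of $\src_\tau$ given $(\bx_\tau,\bx_0)$. Therefore
\[
\E[\obsnoise\src_\tau^\top\mid\bx_\tau,\bx_0]=\E[\obsnoise\mid\cdot]\,\E[\src_\tau^\top\mid\cdot]=0 .
\]
I would write this step carefully: conditioning on $\bx_\tau$ correlates $\src_\tau$ with $\bx_1$, but this does not involve $\obsnoise$. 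With the cross-term gone, $\bar\Sigma_\tau=\beta_\tau^2\obscov+\obsmatrix\srccov_\tau\obsmatrix^\top$, completing the proof.
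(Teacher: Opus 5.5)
Your proof is correct and follows the same route as the paper's. Both substitute $\obs=\obsmatrix\bx_1+\obsnoise$, rewrite $\alpha_\tau\bx_0+\beta_\tau\bx_1=\bx_\tau-\src_\tau$ to get the decomposition, and use $\obsnoise\perp(\src_\tau,\bx_\tau)\mid\bx_0$ to remove both the noise mean and the cross-covariance. Your weak-union step, $\obsnoise\perp\src_\tau\mid(\bx_\tau,\bx_0)$, spells out what the paper compresses into ``the cross terms vanish upon conditioning.''
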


\begin{lemma}[Source conditional moments]\label{lem:source_moments}
For the path \eqref{eq:general_interpolant}, the conditional moments of the source process are
\begin{align}\label{eq:source_moments}
    \srcmean_\tau(\bx,\bx_0) = -\sigpath_\tau^2\,\genscore_\tau(\bx,\bx_0),
    \qquad
    \srccov_\tau(\bx,\bx_0) = \sigpath_\tau^2\,\bI + \sigpath_\tau^4\,\nabla_{\bx}\genscore_\tau(\bx,\bx_0).
\end{align}
\end{lemma}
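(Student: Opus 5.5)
This is Tweedie's formula and its second-order version, applied with the Gaussian source $\src_\tau=\sigpath_\tau\latentnoise$ treated as additive noise. Conditionally on $\bx_0$, I write $\bx_\tau = \bz + \src_\tau$ with $\bz := \alpha_\tau\bx_0+\beta_\tau\bx_1$. Since $\latentnoise$ is independent of $(\bx_0,\bx_1)$, the noise $\src_\tau\sim\mathcal{N}(0,\sigpath_\tau^2\bI)$ is independent of $\bz$ given $\bx_0$. Hence the marginal density is a Gaussian convolution:
\[
\conddist[\tau]{\bx}{\bx_0}=\int q(\bz\mid\bx_0)\,\varphi_{\sigpath_\tau}(\bx-\bz)\,\rd\bz,
\qquad \varphi_{\sigpath}(\bm r)\propto \exp\!\big(-\|\bm r\|^2/(2\sigpath^2)\big).
\]
Here $q(\cdot\mid\bx_0)$ is the law of $\bz$. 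By Bayes, the conditional law of $\src_\tau=\bx-\bz$ given $\bX_\tau=\bx,\bx_0$ has density proportional to $q(\bx-\bm r\mid\bx_0)\varphi_{\sigpath_\tau}(\bm r)$. I will assume $\sigpath_\tau>0$ and enough regularity to differentiate under the integral, for example $q$ having finite second moments.

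\textbf{Mean.} I differentiate the convolution in $\bx$, moving the derivative onto the Gaussian factor. This gives $\nabla_{\bx}\varphi_{\sigpath_\tau}(\bx-\bz) = -\sigpath_\tau^{-2}(\bx-\bz)\varphi_{\sigpath_\tau}(\bx-\bz)$. Dividing by $\conddist[\tau]{\bx}{\bx_0}$ gives
\[
\genscore_\tau(\bx,\bx_0) = -\sigpath_\tau^{-2}\,\E[\bx-\bz\mid\bX_\tau=\bx,\bx_0] = -\sigpath_\tau^{-2}\srcmean_\tau,
\]
so $\srcmean_\tau=-\sigpath_\tau^2\genscore_\tau$. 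This is the first identity.

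\textbf{Covariance.} I differentiate the mean identity once more, in the form $\E[\src_\tau\mid\bx] = \int \bm r\, q(\bx-\bm r)\varphi(\bm r)\,\rd\bm r\,/\,p_\tau(\bx)$. It is cleaner to work in the $\bz$-parametrisation, $\E[\bz\mid\bx]=\int \bz\, q(\bz)\varphi(\bx-\bz)\,\rd\bz/p_\tau(\bx)$, and differentiate in $\bx$. The numerator's derivative brings down $-\sigpath_\tau^{-2}(\bx-\bz)^\top$, and the denominator's derivative gives $-\E[\bz\mid\bx]\,\genscore_\tau^\top$. Combining them yields
\[
\nabla_{\bx}\E[\bz\mid\bx] = \sigpath_\tau^{-2}\cov(\bz\mid\bx) = \sigpath_\tau^{-2}\srccov_\tau.
\]
The covariance of $\bz$ equals that of $\src_\tau$ because $\bz=\bx-\src_\tau$ with $\bx$ fixed. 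On the other hand, $\E[\bz\mid\bx]=\bx-\srcmean_\tau=\bx+\sigpath_\tau^2\genscore_\tau$, so $\nabla_{\bx}\E[\bz\mid\bx] = \bI+\sigpath_\tau^2\nabla_{\bx}\genscore_\tau$. Equating the two expressions gives $\srccov_\tau=\sigpath_\tau^2\bI+\sigpath_\tau^4\nabla_{\bx}\genscore_\tau$.

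\textbf{Main obstacle.} The only delicate step is the second-order differentiation. It requires keeping the sign and transpose conventions of $\nabla_{\bx}\genscore_\tau$ consistent; the Jacobian is symmetric, being the Hessian of $\log p_\tau$, so this is harmless. It also requires justifying the interchange of differentiation and integration by dominated convergence, which the Gaussian factor makes routine. The degenerate endpoint $\sigpath_\tau=0$ is handled by continuity or excluded, since both sides then vanish.
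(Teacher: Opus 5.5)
Your proof is correct and follows the same route as the paper. Both treat $\bx_\tau=\bm m_\tau+\src_\tau$ (your $\bz$ is the paper's $\bm m_\tau$) as an additive-Gaussian model, combine first-order Tweedie with the second-order identity $\nabla_{\bx}\E[\bm m_\tau\mid\bx]=\sigpath_\tau^{-2}\cov(\bm m_\tau\mid\bx)$, and equate the latter with $\nabla_{\bx}(\bx+\sigpath_\tau^2\genscore_\tau)$. The difference is presentational: the paper states these identities for $\latentnoise$ in Lemma~\ref{lem:tweedie}, citing the second-order one as standard, and rescales by $\sigpath_\tau$, whereas you derive both by differentiating the Gaussian convolution, which makes your argument more self-contained.
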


Both proofs are in Appendix~\ref{appendix:proof_interpolant_likelihood}. The source mean and covariance for the SI ($\src_\tau=\gamma_\tau\bW_\tau$, $\sigpath_\tau=\gamma_\tau\sqrt{\tau}$) are computed as follows:
\begin{align*}
    \srcmean_\tau
    &= -\gamma_\tau^2\tau A_\tau\!\left(\beta_\tau\drift_\tau-\bm c_\tau\right),
    &\quad
    \srccov_\tau
    &= \gamma_\tau^2\tau\bI
       +\gamma_\tau^4\tau^2 A_\tau
        \!\left(\beta_\tau \nabla_{\bx}\drift_\tau-\dot\beta_\tau\bI\right),
\end{align*}
$A_\tau$ and $\bm c_\tau$ defined in \eqref{eq:SI_score}. The FM/DM score is recovered from the velocity through \eqref{eq:fm_score}.

\subsection{Practical approximations}
\label{subsec:approximations}
\label{subsec:covariance_approximation}

\textbf{Gaussian likelihood.}
The moments in Lemmas~\ref{lem:interpolated_likelihood}--\ref{lem:source_moments} are exact for an exact prior score, but the conditional likelihood is not necessarily Gaussian. However, we approximate it by a Gaussian with these moments and differentiate through its mean, holding its covariance fixed:
\begin{align}\label{eq:interpolant_score_closed}
    \interpolantscorefun_\tau^{\interpolantobs_\tau}(\bx_\tau,\bx_0)
    = (\nabla_{\bx_\tau}\bar{\mu}_\tau)^\top
      \bar{\Sigma}_\tau^{-1}
      (\interpolantobs_\tau-\bar{\mu}_\tau),
    \qquad
    \bar{\Sigma}_\tau
    = \beta_\tau^2\obscov
      +\obsmatrix\srccov_\tau\obsmatrix^\top.
\end{align}
This closure is exact in the linear--Gaussian regime. Section~\ref{sec:results} evaluates its accuracy empirically. Compared with the naive likelihood $\mathcal{N}(\obsmatrix\bx_\tau,\beta_\tau^2\obscov)$, the mean correction $-\obsmatrix\srcmean_\tau$ removes the expected source contribution, while the additional covariance captures its residual uncertainty. Retaining prior correlations allows observations to inform unobserved degrees of freedom, particularly under sparse sensing.

\textbf{Jacobian-free approximation.}
Dropping the score Jacobian consistently from the mean Jacobian and covariance gives
$\bar{\Sigma}_\tau
    \approx \beta_\tau^2\obscov
    +\sigpath_\tau^2\obsmatrix\obsmatrix^\top$ and
$(\nabla_{\bx}\bar\mu_\tau)^\top\approx\obsmatrix^\top$. This approximation is accurate when $\sigpath_\tau^2\|\nabla_{\bx}^2\log p_\tau\|\ll1$ and is best suited to dense, informative observations. The covariance is state-independent and precomputable, requiring only an $\obsdim\times\obsdim$ solve per member beyond prior-model evaluations.

\textbf{Ensemble-shared Jacobian.}
Sparse observations benefit from the prior correlations discarded above. Computing the required Jacobian projections separately costs $\obsdim$ Jacobian--vector products per member per step. We instead evaluate them at the ensemble mean and share the resulting $\obsdim\times\obsdim$ factorisation across all $E$ members, reducing the Jacobian work to $\mathcal{O}(\obsdim)$ network passes and one factorisation per step. Refreshing the shared Jacobian every $k$-th step further reduces its evaluation cost by a factor of $k$, while schedule coefficients are updated at every step.

\textbf{Jacobian damping.}
For stability with strongly nonlinear priors, we introduce $\lambda\in[0,1]$ through the parameterisation-specific covariances
\begin{align}\label{eq:damping_formulas_main}
    \widetilde\srccov{}_\tau^{(\lambda),\mathrm{SI}}
    &= \gamma_\tau^2\tau\bI
       +\gamma_\tau^4\tau^2A_\tau
        \left(\lambda\beta_\tau J_{\drift_\tau}
        -\dot\beta_\tau\bI\right),
    &\quad
    \widetilde\srccov{}_\tau^{(\lambda),\mathrm{FM}}
    &= \sigpath_\tau^2\bI
       +\lambda\sigpath_\tau^4\nabla_{\bx}\genscore_\tau.
\end{align}
For SI, damping affects only the learned drift Jacobian; for FM, it scales the full score-Jacobian correction. Thus $\lambda=0$ recovers the Jacobian-free covariance for FM, whereas SI retains the analytical term $-\gamma_\tau^4\tau^2A_\tau\dot\beta_\tau\bI$.

\subsection{Implementation}
\label{subsec:summary_table}\label{subsec:Implementation}

\begin{figure}
    \centering
    \includegraphics[width=1.0\linewidth]{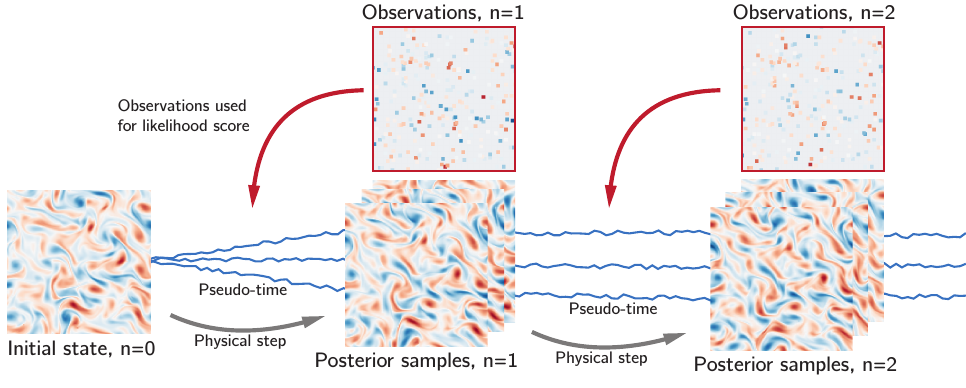}
    \caption{\textbf{Sequential assimilation with the unified sampler.} From a single initial state, $\state^{0}$, an ensemble of posterior samples are generated via the guided
SDE/ODE (Algorithm~\ref{alg:unified}). Each posterior member then initializes the next
forecast, yielding an autoregressive rollout of~\eqref{eq:posterior_distribution}.}
\label{fig:sequential_da}
\end{figure}

All three samplers need only the pre-trained model and an ODE/SDE solver, and share the assimilation loop in Algorithm~\ref{alg:unified}. Recovered coefficients degenerate at $\tau=0$ (in particular, the FM guidance representation is singular because $\beta_0=0$), so the first step uses the prior drift only and the correction is applied from $\tau=\Delta\tau$ onwards. Independent draws give an ensemble. Feeding each posterior sample $\state^{n}$ back before assimilating $\obs^{n+1}$ advances the autoregressive assimilation in physical time $n$ (see Figure \ref{fig:sequential_da}). The solver and number of pseudo-time steps are chosen post-training, here forward Euler for the guided ODE and Euler--Maruyama for the SDEs. Schedules, architectures, and training details are in Appendix~\ref{appendix:implementation}.

\begin{wrapfigure}{r}{0.5\textwidth}
\vspace{-55pt} 
\begin{minipage}{\linewidth}
\setlength{\intextsep}{0pt} 
\begin{algorithm}[H]
\caption{Unified posterior assimilation step}
\label{alg:unified}
\begin{algorithmic}[1]
\STATE \textbf{Input:} $\bx_0=\state^{n-1}$, $\obs=\obs^n$, trained model, schedules, diffusion $\gdiff_\tau$, steps $M$
\STATE Initialise $\bXstar$ from sampler source
\STATE Set $\Delta\tau=1/M$
\FOR{$\tau=0,\Delta\tau,\ldots,1-\Delta\tau$}
    \STATE Form prior drift $\drift_\tau$ (use native $\drift_\tau$ for SI; otherwise $\fmvel_\tau+\tfrac12\gdiff_\tau^2\genscore_\tau$)
    \IF{$\tau=0$}
        \STATE Set likelihood correction $\bm r_\tau=\bm 0$
    \ELSE
        \STATE Compute $\interpolantobs_\tau=\alpha_\tau\obsmatrix\bx_0+\beta_\tau\obs$
        \STATE Compute source moments $\srcmean_\tau, \srccov_\tau$
        \STATE Set $\bar\mu_\tau=\obsmatrix\bXstar-\obsmatrix\srcmean_\tau$
        \STATE Set $\bar\Sigma_\tau=\beta_\tau^2\obscov+\obsmatrix\srccov_\tau\obsmatrix^\top$
        \STATE Compute $\interpolantscorefun_\tau^{\interpolantobs_\tau}=(\nabla_{\bx}\bar\mu_\tau)^\top\bar\Sigma_\tau^{-1}(\interpolantobs_\tau-\bar\mu_\tau)$
        \STATE Set $\bm r_\tau=(\vscoef_\tau+\tfrac12\gdiff_\tau^2)\interpolantscorefun_\tau^{\interpolantobs_\tau}$
    \ENDIF
    \STATE Draw noise $\bz\sim\mathcal{N}(0,\bI)$
    \STATE $\bXstar\leftarrow\bXstar+\left(\drift_\tau+\bm r_\tau\right)\Delta\tau+\gdiff_\tau\sqrt{\Delta\tau}\,\bz$
\ENDFOR
\STATE \textbf{Return:} $\state^n=\bXstar$
\end{algorithmic}
\end{algorithm}
\end{minipage}
\vspace{-10pt} 
\end{wrapfigure}

\section{Results}\label{sec:results}

We evaluate three increasingly realistic problems: a linear--Gaussian system with an exact posterior, stochastically forced 2D Navier--Stokes, and a 2D slice of a 3D urban airflow. All cases use synthetic data, which means that we have direct access to the full ground truth states for evaluation. The samplers of Section~\ref{subsec:summary_table} are reported as \emph{Ours SI-SDE}, \emph{DM-SDE}, and \emph{FM-ODE}, using either a Jacobian-free approximation (\emph{Jac-free}) or a shared Jacobian updated every tenth pseudo-step (\emph{shared}). The field experiments are assimilated autoregressively with ensembles of 64 members.

\textbf{Baselines, scenarios, and metrics.} We compare with FlowDAS~\citep{chen_flowdas_2025}, SDA~\citep{rozet_score-based_2023}, D-Flow SGLD~\citep{parikh_d-flow_2026,ben-hamu_d-flow_2024}, Guided FM~\citep{yan_fig_2024}, and SURGE~\citep{wei_surge_2026}. All methods use the same prior trained SI or FM (DM is derived from the trained FM) to isolate the effect of the posterior sampling techniques rather than the quality of the training. See Appendix \ref{appendix:implementation} for details on the neural network architecture and training. According to \cite{wei_surge_2026}, both SDA and FlowDAS perform better when combined with SURGE, so we report only the SURGE-enhanced versions. For Navier--Stokes, we additionally include an ensemble Kalman filter~\citep{evensen_data_2022} and bootstrap particle filter~\citep{carrassi_data_2018} using the true solver. Hyperparameters are tuned on separate short trajectories. Details are given in Appendix~\ref{appendix:methods}.

We consider super-resolution from $32^2$ or $16^2$ to $128^2$ (where the observation operator is simply subsampling grid points), and random sensors covering $5\%$ or $1.5625\%$ of the domain for the Navier--Stokes case and $1.5625\%$ or $0.78125\%$ for the urban case, with Gaussian observation noise. Navier--Stokes uses all four settings and urban airflow the two sparse-sensor settings. For the field cases, we report RMSE, CRPS, and runtime per assimilation step, and additional metrics in the appendix. For the linear--Gaussian case, we report Gaussian KL divergence to the exact posterior. Metric definitions are given in Appendix~\ref{appendix:metrics}. For the Navier-Stokes and the Urban test case, the metrics are computed and aggregated over several autoregressive posterior sampling steps. All runs are performed on an NVIDIA GeForce RTX 3090 with 24 GB memory.

\subsection{Simple analytical test case}
\label{subsec:results_analytical}

\begin{wrapfigure}[16]{r}{0.5\linewidth}
\vspace{-50pt} 
\centering
\includegraphics[width=\linewidth]{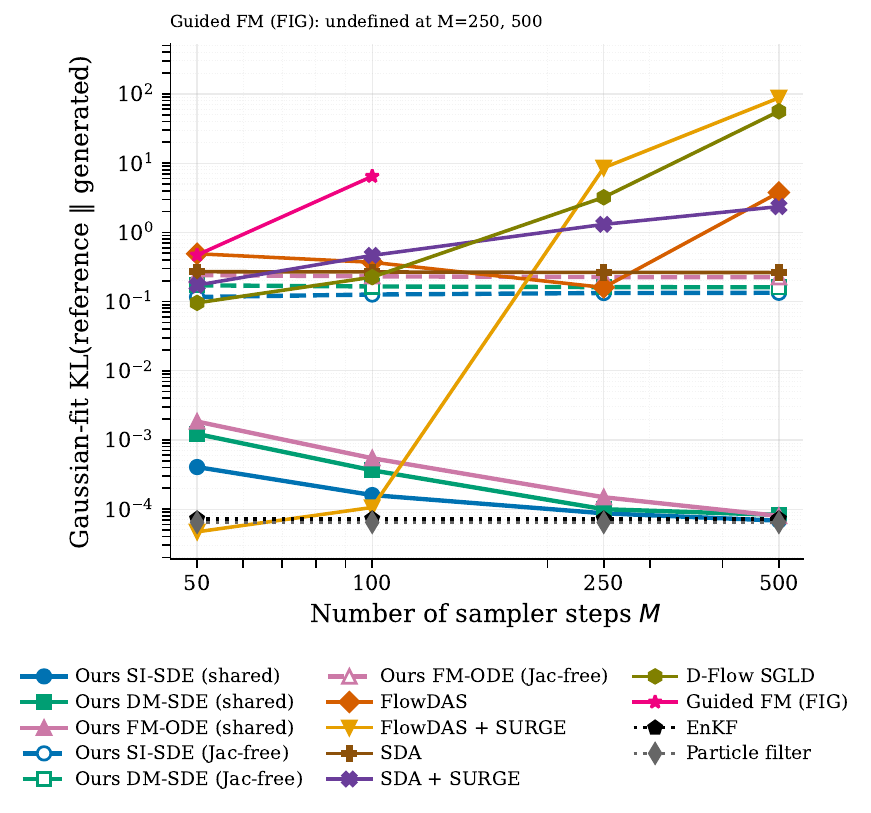}
\vspace{-20pt} 
\caption{KL to the exact posterior vs.\ sampler steps $M$ (five seeds).}
\label{fig:analytical}
\end{wrapfigure}

The first case is a 2D linear--Gaussian system with $\statedim=\obsdim=2$ of Appendix~\ref{appendix:analytical_case}: $\state^1 = \state^{0} + \modelnoise$, $\obs^1 = \obsmatrix\state^1 + \obsnoise$ with standard-normal noises and $\obsmatrix=\obscov=\bI$. The exact posterior, SI drift, and FM velocity are known in closed form, so no network is trained and the samplers are isolated from model error. We use 100,000 samples for all methods. 

Figure~\ref{fig:analytical} shows the KL divergence to the exact posterior versus the number of sampler steps $M$. With the full inflated covariance, all three of our samplers converge to the exact posterior within sampling constraints (KL $\approx 10^{-4}$), matching the EnKF and particle filter in their ideal linear--Gaussian regime; the Jacobian-free variant performs significantly worse (KL $\approx 0.12$--$0.24$). Among the baselines, only FlowDAS+SURGE also recovers the exact posterior. However, it is worth noting that with increasing number of SDE steps, it diverges, suggesting high sensitivity to hyperparameters. Guidance alone does not converge at any step count (SDA KL $\approx 0.27$, FlowDAS $\geq 0.49$), Guided FM collapses, and SDA+SURGE and D-Flow SGLD perform modestly at moderate $M$ but diverge as $M$ grows at fixed per-step hyperparameters. Density panels and a full table of results are in Appendix~\ref{appendix:analytical_case}.

\subsection{Stochastic Navier--Stokes}
\label{subsec:results_ns}
The second case is an incompressible fluid on the torus $\mathcal{T}^2=[0,2\pi]^2$, governed by the stochastically forced two-dimensional vorticity equation,
\begin{align}\label{eq:ns_equation}
    \rd\vorticity + \velocity\cdot\nabla\vorticity\,\rd t
    = \nu\Delta\vorticity\,\rd t - \alpha\vorticity\,\rd t + \varepsilon\,\rd\xi,
\end{align}
where $\velocity=\nabla^\perp\psi$ with $-\Delta\psi=\vorticity$ and $\rd\xi$ a temporally white forcing on four Fourier modes. Trajectories are generated with JAX-CFD~\citep{jax_cfd_2021} on a $256^2$ pseudo-spectral grid at $\nu=10^{-3}$ and down-sampled to $128^2$; we assimilate 15 consecutive steps at interval $\Delta t=0.5$, observing vorticity through $\obs=\obsoperator(\vorticity)+\obsnoise$ with $\obsnoise\sim\mathcal{N}(0,0.05^2\bI)$ (data generation details in Appendix~\ref{appendix:ns_case}).

\begin{figure*}[t]
    \centering
    \setlength{\tabcolsep}{1.5pt}
    \renewcommand{\arraystretch}{0.8}
    \resizebox{\linewidth}{!}{%
    \begin{tabular}{@{}c@{\hspace{5pt}}c@{}}
    \begin{tabular}[c]{@{}c@{}}
        \includegraphics[width=0.15\linewidth]{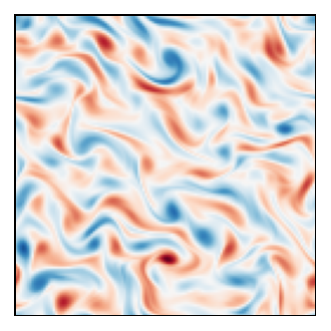} \\[1pt]
        {\scriptsize\textbf{True state}} \\[6pt]
        \includegraphics[width=0.15\linewidth]{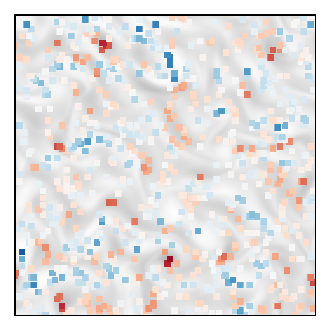} \\[1pt]
        {\scriptsize\textbf{Observations}}
    \end{tabular}
    &
    \begin{tabular}[c]{@{}ccccccc@{}}
        &
        {\scriptsize\textbf{SI-SDE (shared)}} &
        {\scriptsize\textbf{SI-SDE (Jac-free)}} &
        {\scriptsize\textbf{SDA+SURGE}} &
        {\scriptsize\textbf{FlowDAS+SURGE}} &
        {\scriptsize\textbf{D-Flow SGLD}} & \\
        \rotatebox{90}{\makebox[0.15\linewidth][c]{\scriptsize\textbf{Posterior mean}}} &
        \includegraphics[width=0.15\linewidth]{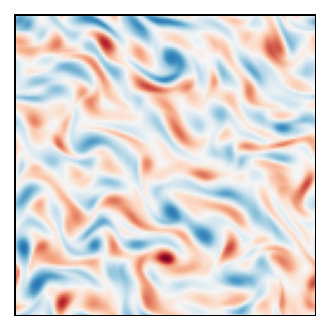} &
        \includegraphics[width=0.15\linewidth]{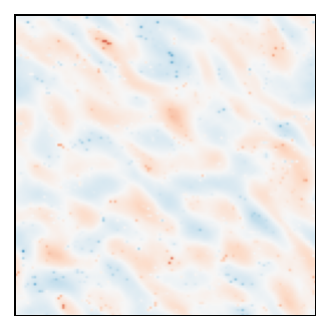} &
        \includegraphics[width=0.15\linewidth]{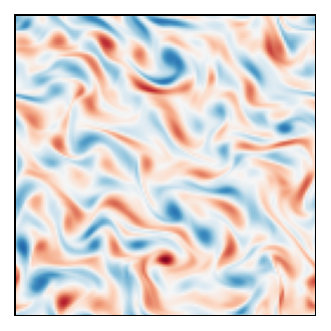} &
        \includegraphics[width=0.15\linewidth]{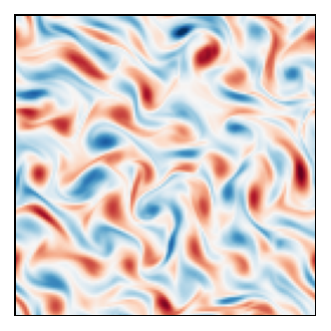} &
        \includegraphics[width=0.15\linewidth]{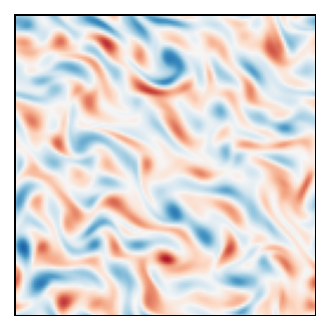} &
        \includegraphics[height=0.15\linewidth]{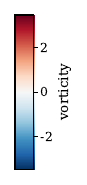} \\
        \rotatebox{90}{\makebox[0.15\linewidth][c]{\scriptsize\textbf{Ensemble std.}}} &
        \includegraphics[width=0.15\linewidth]{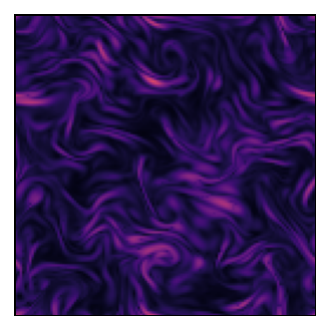} &
        \includegraphics[width=0.15\linewidth]{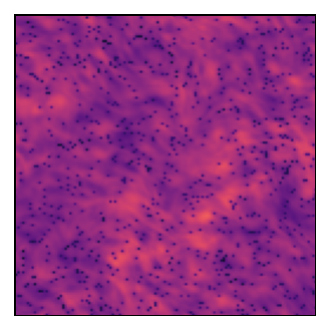} &
        \includegraphics[width=0.15\linewidth]{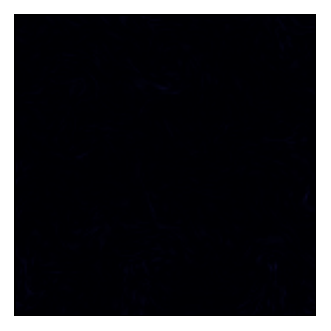} &
        \includegraphics[width=0.15\linewidth]{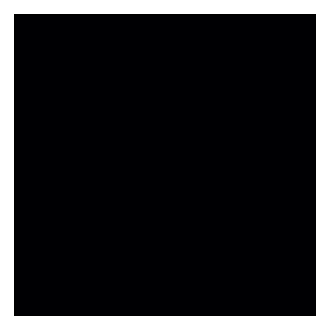} &
        \includegraphics[width=0.15\linewidth]{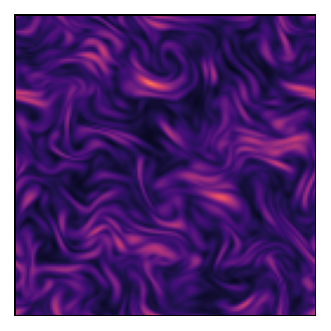} &
        \includegraphics[height=0.15\linewidth]{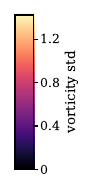} \\
        \rotatebox{90}{\makebox[0.15\linewidth][c]{\scriptsize\textbf{Absolute error}}} &
        \includegraphics[width=0.15\linewidth]{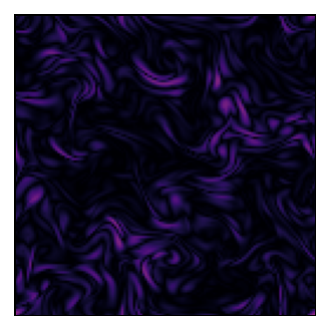} &
        \includegraphics[width=0.15\linewidth]{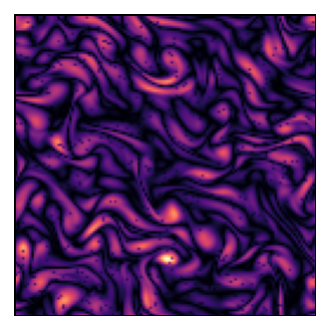} &
        \includegraphics[width=0.15\linewidth]{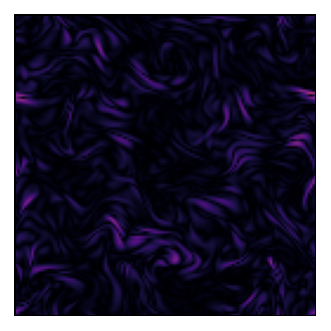} &
        \includegraphics[width=0.15\linewidth]{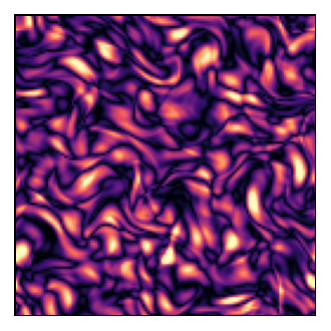} &
        \includegraphics[width=0.15\linewidth]{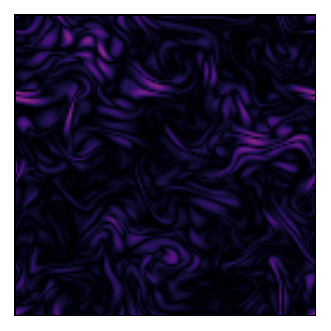} &
        \includegraphics[height=0.15\linewidth]{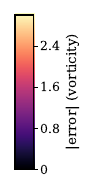} \\
    \end{tabular}
    \end{tabular}}
    \vspace{-10pt}
    \caption{Navier--Stokes, sparse sensors at $5\%$: truth and observations (left), and the posterior mean, ensemble standard deviation, and absolute error at the final assimilation step. One colour scale per row. All methods and scenarios: Figures~\ref{fig:ns_fields_32}--\ref{fig:ns_fields_sparse1p5}.}
    \label{fig:ns_fields_main}
\end{figure*}

Table~\ref{tab:ns_accuracy} summarizes accuracy and cost, while Figure~\ref{fig:ns_fields_main} shows reconstructions for the $5\%$-sensor case. In dense super-resolution settings, our samplers achieve the best RMSE and CRPS; at $32^2$, the Jacobian-free covariance matches the shared approximation at roughly one-tenth the cost. Under sparse observations, however, covariance quality becomes critical: the shared inflated covariance reduces SI-SDE RMSE by $2$--$4\times$ relative to the Jacobian-free variant, which, like Guided FM, over-smooths toward the prior mean. SI-SDE is therefore the strongest of our samplers in this regime. SDA+SURGE and D-Flow SGLD remain competitive, but SDA+SURGE collapses the ensemble, while D-Flow costs ${\approx}700$\,s per step—about an order of magnitude more than SI-SDE (shared) and two orders more than the Jacobian-free samplers. SI-SDE (shared) also achieves the best or tied-best CRPS in three of four scenarios among methods using $E=64$. Classical filters with $E=64$ are not competitive; the $E=1000$ EnKF reference attains the lowest sparse-scenario errors but requires the true solver and ${>}100$\,s per step. Additional convergence, error, and enstrophy results are given in Appendix~\ref{appendix:ns_case}.

\begin{table*}[ht]
    \centering
    \setlength{\tabcolsep}{3.5pt}
    \caption{Navier--Stokes: vorticity RMSE and CRPS across the four observation scenarios, with per-step cost (lower is better). $E=64$ and $M=100$ sampler steps. \textbf{Bold} marks the best value in each column, together with every value equal to it to 2 decimal places in unscaled units.}
    \label{tab:ns_accuracy}
    \vspace{-10pt}
    \resizebox{\ifdim\width>\textwidth\textwidth\else\width\fi}{!}{%
    \begin{tabular}{l|cccc|cccc|cccc}
        \toprule
        & \multicolumn{4}{c|}{\textbf{Vorticity RMSE ($\times 10^{-1}$)}}
        & \multicolumn{4}{c|}{\textbf{Vorticity CRPS ($\times 10^{-1}$)}}
        & \multicolumn{4}{c}{\textbf{Cost (s/step)}} \\
        & $16^2$ & $32^2$ & $5\%$ & $\tfrac{1}{64}$ & $16^2$ & $32^2$ & $5\%$ & $\tfrac{1}{64}$ & $16^2$ & $32^2$ & $5\%$ & $\tfrac{1}{64}$ \\
        \midrule
        \multicolumn{13}{l}{\textit{Ours -- ensemble-shared inflated covariance (Jacobian refreshed every $k=10$ steps)}} \\
        Ours (SI-SDE) & \textbf{2.024} & 0.667 & 1.731 & 3.854 & \textbf{1.062} & \textbf{0.347} & \textbf{0.872} & 2.073 & 41.3 & 77.7 & 68.4 & 41.8 \\
        Ours (DM-SDE) & 2.162 & \textbf{0.638} & 3.424 & 6.834 & 1.154 & \textbf{0.333} & 1.823 & 3.830 & 87.2 & 164.1 & 144.3 & 88.6 \\
        Ours (FM-ODE) & 2.102 & \textbf{0.626} & 3.828 & 7.038 & \textbf{1.128} & \textbf{0.327} & 2.051 & 3.945 & 89.1 & 164.9 & 144.7 & 89.3 \\
        \midrule
        \multicolumn{13}{l}{\textit{Ours -- Jacobian-free covariance}} \\
        Ours (SI-SDE) & 2.765 & 0.670 & 6.969 & 8.500 & 1.518 & 0.355 & 3.952 & 4.818 & \textbf{6.9} & 7.1 & 7.1 & 6.9 \\
        Ours (DM-SDE) & 2.759 & 0.672 & 6.756 & 8.362 & 1.501 & 0.351 & 3.754 & 4.681 & 13.0 & 13.3 & 13.2 & 13.0 \\
        Ours (FM-ODE) & 2.733 & 0.686 & 7.026 & 8.460 & 1.492 & 0.359 & 3.897 & 4.732 & 13.0 & 13.2 & 13.1 & 13.0 \\
        \midrule
        FlowDAS + SURGE & 5.066 & 1.540 & 8.632 & 11.896 & 3.853 & 1.164 & 6.742 & 9.426 & 32.2 & 33.5 & 33.2 & 33.7 \\
        SDA + SURGE & 3.699 & 1.577 & \textbf{1.616} & 2.970 & 2.642 & 1.080 & 1.130 & 2.092 & 64.7 & 68.3 & 69.5 & 69.6 \\
        D-Flow SGLD & 2.455 & 2.010 & 1.974 & \textbf{2.313} & 1.272 & 1.004 & 0.992 & \textbf{1.179} & 671.6 & 693.7 & 701.5 & 704.2 \\
        Guided FM (FIG) & 2.986 & 0.780 & 7.231 & 8.461 & 1.623 & 0.415 & 4.082 & 4.756 & 13.6 & 14.0 & 14.1 & 14.2 \\
        \midrule
        EnKF & 2.683 & 2.161 & 2.768 & 6.222 & 1.680 & 1.320 & 1.384 & 3.319 & 7.5 & 7.5 & 7.5 & 7.5 \\
        Particle filter & 8.753 & 8.388 & 8.240 & 8.577 & 6.914 & 6.613 & 6.478 & 6.751 & 7.3 & \textbf{6.8} & \textbf{6.8} & \textbf{6.8} \\
        \midrule
        EnKF reference ($E=1000$) & 1.442 & 0.941 & 0.935 & 1.053 & 0.779 & 0.511 & 0.516 & 0.565 & 102.0 & 102.4 & 103.0 & 122.2 \\
        \bottomrule
    \end{tabular}}
    \vspace{-15pt}
  \end{table*}

\subsection{Urban airflow}
\label{subsec:results_urban}

The third case considers coupled velocity $(u,v,w)$ and temperature fields over an urban area with several buildings, simulated with the building-resolving LES code uDALES~\citep{suter_udales_2022}. The $128\times128\times128\,\mathrm{m}^3$ domain has $1\,\mathrm{m}$ resolution and is periodic in the horizontal directions. Simulations are spun up for 100~s and sampled every 2~s, retaining a horizontal slice at 2~m height. We initialize from a different held-out flow state and forecast 145 physical steps, assimilating sparse velocity observations every third step (49 updates). White noise with standard deviation $0.05$ is added to all velocity observations. Both velocity and temperature are reconstructed, so temperature is inferred only through its coupling to the observed flow. Table~\ref{tab:urban_accuracy_M50} reports rollout-averaged results at both sensor densities, and Figure~\ref{fig:urban_fields_main} shows temperature reconstructions at $0.78125\%$ density. Further details are in Appendix~\ref{appendix:urban_case}.

\newcommand{\urbmainpanel}[1]{%
  {\includegraphics[width=0.15\linewidth]{figures/urban/singles/urban_states_temperature_sparse_0_78125__#1.pdf}}%
}
\newcommand{\urbmainref}[1]{%
  {\includegraphics[width=0.15\linewidth]{figures/urban/singles/#1.pdf}}%
}
\newcommand{\urbmaincbar}[1]{%
  {\includegraphics[height=0.15\linewidth]{figures/urban/singles/urban_states_temperature_sparse_0_78125__cbar_#1.pdf}}%
}

\begin{figure*}[t]
    \centering
    \setlength{\tabcolsep}{1.5pt}
    \renewcommand{\arraystretch}{0.8}
    \resizebox{\linewidth}{!}{%
    \begin{tabular}{@{}c@{\hspace{5pt}}c@{}}
    \begin{tabular}[c]{@{}c@{}}
        \vspace{-5.0pt}
        \urbmainref{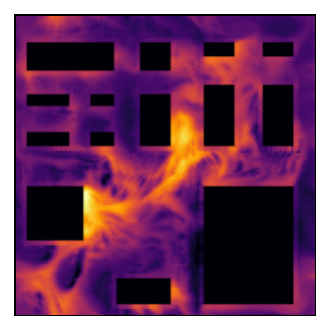} \\[1pt]
        \vspace{-5.0pt}
        {\scriptsize\textbf{True temperature}} \\[6pt]
        \vspace{-5.0pt}
        \urbmainref{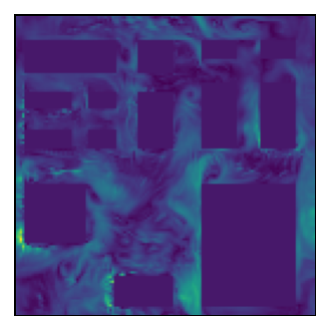} \\[1pt]
        \vspace{-5.0pt}
        {\scriptsize\textbf{True velocity}} \\[6pt]
        \vspace{-5.0pt}
        \urbmainref{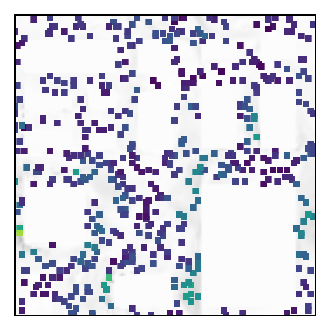} \\[1pt]
        \vspace{-5.0pt}
        {\scriptsize\textbf{Velocity observations}}
    \end{tabular}
    &
    \begin{tabular}[c]{@{}ccccccc@{}}
        &
        {\scriptsize\textbf{SI-SDE (shared)}} &
        {\scriptsize\textbf{SI-SDE (Jac-free)}} &
        {\scriptsize\textbf{SDA+SURGE}} &
        {\scriptsize\textbf{FlowDAS+SURGE}} &
        {\scriptsize\textbf{D-Flow SGLD}} & \\
        \rotatebox{90}{\makebox[0.15\linewidth][c]{\scriptsize\textbf{Posterior mean}}} &
        \urbmainpanel{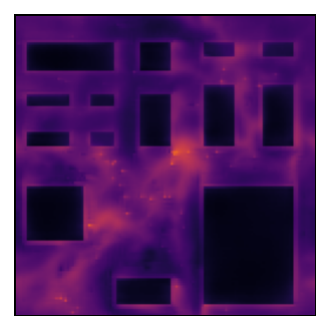} &
        \urbmainpanel{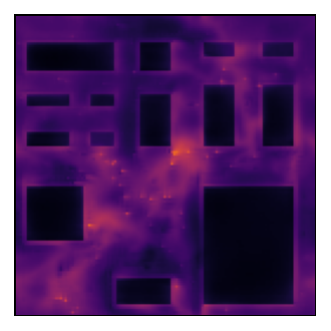} &
        \urbmainpanel{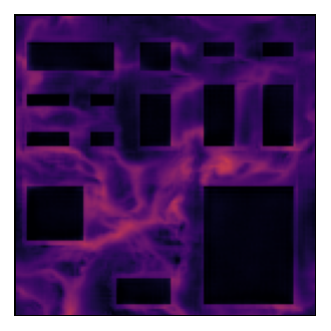} &
        \urbmainpanel{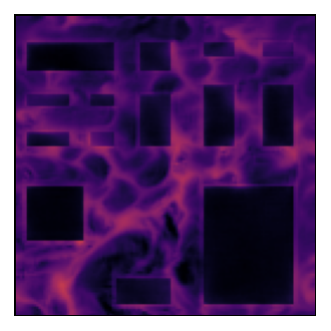} &
        \urbmainpanel{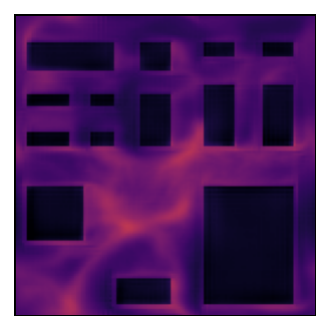} &
        \urbmaincbar{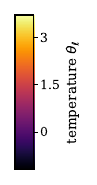} \\
        \rotatebox{90}{\makebox[0.15\linewidth][c]{\scriptsize\textbf{Ensemble std.}}} &
        \urbmainpanel{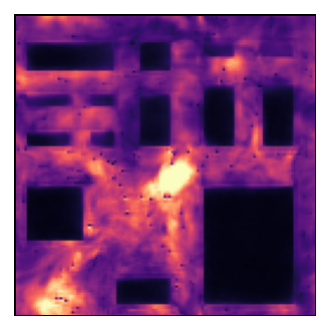} &
        \urbmainpanel{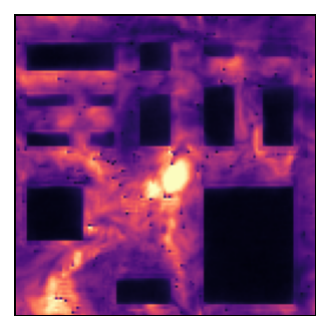} &
        \urbmainpanel{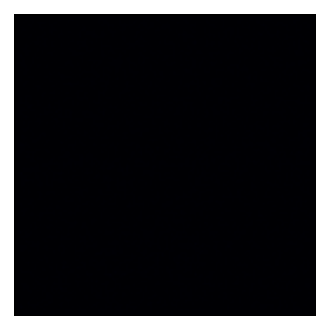} &
        \urbmainpanel{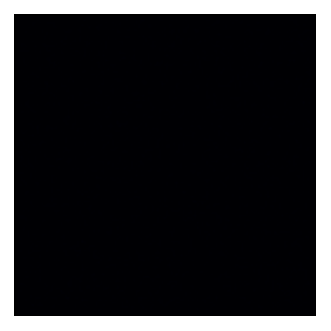} &
        \urbmainpanel{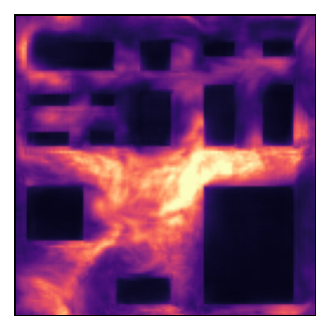} &
        \urbmaincbar{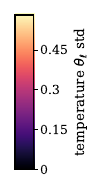} \\
        \rotatebox{90}{\makebox[0.15\linewidth][c]{\scriptsize\textbf{Absolute error}}} &
        \urbmainpanel{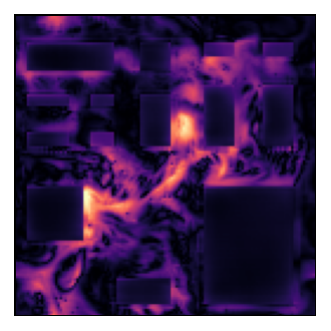} &
        \urbmainpanel{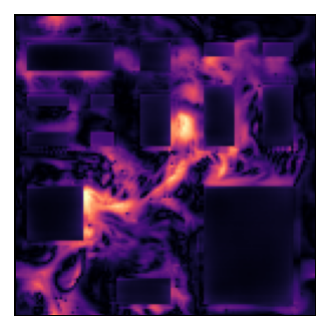} &
        \urbmainpanel{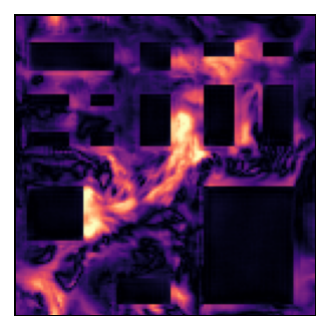} &
        \urbmainpanel{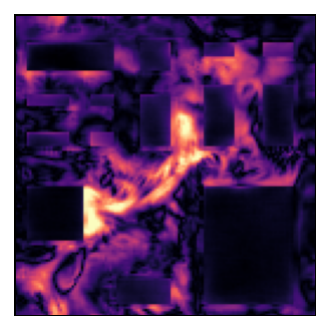} &
        \urbmainpanel{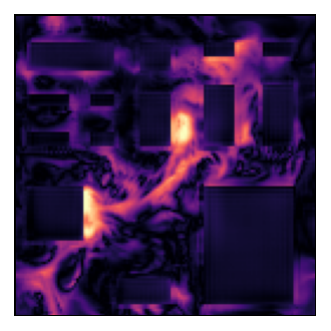} &
        \urbmaincbar{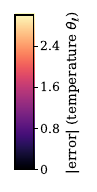} \\
    \end{tabular}
    \end{tabular}}
    \vspace{-5pt}
    \caption{Urban airflow, sparse sensors at $0.78125\%$: true potential temperature, true velocity magnitude, and observed velocity sensors (left), followed by the posterior mean, ensemble standard deviation, and absolute error of the unobserved \emph{temperature} at the final forecast step.}
    \label{fig:urban_fields_main}
    \vspace{-15pt}
\end{figure*}

Table~\ref{tab:urban_accuracy_M50} reveals a split between observed and unobserved variables. The computational cost per step is omitted as timings are approximately the same as for the Navier-Stokes case. DM-SDE gives the lowest velocity RMSE and CRPS and best calibration (see spread-skill figures in Figure~\ref{fig:urban_ss_vs_step}) among our samplers, whereas SI-SDE gives the lowest temperature RMSE and CRPS at both sensor densities. The shared covariance improves SI-SDE temperature RMSE from $0.38/0.43$ to $0.36/0.42$. D-Flow SGLD is the closest baseline, matching the best velocity or CRPS values but costing significantly more per step. Both SURGE ensembles are severely underdispersed. Over the long rollout, velocity errors decrease as the observations correct the mismatched initial state, while errors in the unobserved temperature accumulate (Appendix~\ref{appendix:urban_case}). Figure~\ref{fig:urban_fields_main} nevertheless shows that SI-SDE recovers the broad thermal-plume structure and places uncertainty around unresolved fine scales.

\begin{wraptable}{r}{0.6\textwidth}
    \centering
    \vspace{-40pt}
    \setlength{\tabcolsep}{3.5pt}
    \caption{Urban (uDALES): rollout-averaged velocity RMSE
    (V-RMSE), temperature RMSE (T-RMSE), velocity CRPS
    (V-CRPS), and temperature CRPS (T-CRPS).
    Mean over 4--5 held-out trajectories (cell-dependent),
    $E=64$, and $M=50$ sampler steps.
    \textbf{Bold} marks the best value in each column,
    including ties to 2 decimal places.}
    \vspace{-10pt}
    \label{tab:urban_accuracy_M50}
    \resizebox{\linewidth}{!}{%
    \begin{tabular}{l|cc|cc|cc|cc}
        \toprule
        & \multicolumn{2}{c|}{\textbf{V-RMSE}}
        & \multicolumn{2}{c|}{\textbf{T-RMSE}}
        & \multicolumn{2}{c|}{\textbf{V-CRPS}}
        & \multicolumn{2}{c}{\textbf{T-CRPS}} \\
        & $\tfrac{1}{64}$ & $\tfrac{1}{128}$
        & $\tfrac{1}{64}$ & $\tfrac{1}{128}$
        & $\tfrac{1}{64}$ & $\tfrac{1}{128}$
        & $\tfrac{1}{64}$ & $\tfrac{1}{128}$ \\
        \midrule
        \multicolumn{9}{l}{\textit{Ours -- shared covariance
        (Jacobian refresh $k=10$)}} \\
        Ours (SI-SDE)
        & 0.60 & 0.67 & \textbf{0.36} & \textbf{0.42}
        & 0.33 & 0.38 & \textbf{0.21} & \textbf{0.25} \\
        Ours (DM-SDE)
        & \textbf{0.56} & \textbf{0.62} & 0.45 & 0.55
        & \textbf{0.30} & \textbf{0.34} & 0.28 & 0.36 \\
        Ours (FM-ODE)
        & 0.59 & 0.66 & 0.40 & 0.51
        & 0.33 & 0.37 & 0.25 & 0.33 \\
        \midrule
        \multicolumn{9}{l}{\textit{Ours -- Jacobian-free covariance}} \\
        Ours (SI-SDE)
        & 0.61 & 0.68 & 0.38 & 0.43
        & 0.34 & 0.38 & 0.23 & 0.26 \\
        Ours (DM-SDE)
        & 0.57 & 0.63 & 0.47 & 0.56
        & \textbf{0.30} & \textbf{0.34} & 0.29 & 0.37 \\
        Ours (FM-ODE)
        & 0.60 & 0.66 & 0.42 & 0.52
        & 0.33 & 0.37 & 0.26 & 0.34 \\
        \midrule
        FlowDAS + SURGE
        & 0.88 & 0.89 & 0.53 & 0.55
        & 0.66 & 0.66 & 0.40 & 0.42 \\
        SDA + SURGE
        & 0.73 & 0.80 & 0.47 & 0.55
        & 0.53 & 0.58 & 0.34 & 0.41 \\
        D-Flow SGLD
        & 0.57 & \textbf{0.62} & \textbf{0.36} & 0.43
        & 0.32 & 0.35 & \textbf{0.21} & \textbf{0.25} \\
        Guided FM (FIG)
        & 0.59 & 0.65 & 0.41 & 0.51
        & 0.33 & 0.36 & 0.25 & 0.33 \\
        \bottomrule
    \end{tabular}}
    \vspace{-10pt}
\end{wraptable}

\section{Conclusion}\label{sec:conclusion}
We introduced a unified posterior-sampling framework for stochastic interpolants, flow matching, and diffusion models based on a shared observation-interpolant likelihood correction. With the full covariance, all three samplers closely approximate the exact posterior in the linear--Gaussian test. On Navier--Stokes, our methods achieve the best accuracy under dense observations among all the generative posterior samplers, while the shared covariance substantially improves SI-SDE under sparse observations. In the urban case, DM-SDE performs best on observed velocity, while SI-SDE best limits error growth in temperature inferred from velocity alone. There, the Jacobian-free approximation remains close to the shared variant at $3$--$4\times$ lower cost. Thus the best sampler and covariance approximation are problem-dependent: prior correlations are decisive for sparse Navier--Stokes, but provide only modest gains in the urban case.

\textbf{Limitations.} The exactness result assumes an exact prior and likelihood score, while the practical method uses a Gaussian approximation, stopped covariance gradients, and approximate Jacobians. These approximations can introduce bias, as seen for the Jacobian-free variant in the analytical and sparse Navier--Stokes cases. The current formulation also assumes linear Gaussian observations, and the shared covariance becomes costly as the observation dimension grows. Finally, one-step autoregressive assimilation neither reweights past states nor uses future observations, so model and approximation errors may accumulate over long rollouts. Extending the method to nonlinear observations, smoothing, and broader model mismatch remains future work.

\subsection*{Acknowledgments}
This research was funded by the National Growth Fund of the Netherlands and administered by the Netherlands Organisation for Scientific Research (NWO) under the AINed XS grant NGF.1609.242.037.

\subsection*{Reproducibility statement}
Theoretical claims and assumptions from Section~\ref{subsec:posterior_dynamics} are formally proven in Appendices~\ref{appendix:score_velocity}, \ref{appendix:proof_conditioning}, and \ref{appendix:proof_interpolant_likelihood}. Comprehensive experimental configurations—including architectures, metrics, baselines, and data generation—are detailed in Appendices~\ref{appendix:implementation} through~\ref{appendix:urban_case} (including specific configurations in \ref{subsec:ns_data}). The code used to produce the results presented in the paper can be found in \url{https://github.com/nmucke/scientific-stochastic-interpolants}.

\subsection*{Ethics statement}
The research presented here introduces a unified framework for Bayesian data assimilation using flow-based generative models. The methodology is evaluated exclusively on physical systems, specifically linear-Gaussian dynamics, two-dimensional Navier-Stokes equations, and urban airflow simulations. This work does not involve human subjects, sensitive personal datasets, or applications that risk generating harmful insights or propagating societal bias. The foundational contributions strictly advance physical forecasting methodologies and present no foreseeable negative societal impacts.   

\subsection*{AI use statement}
In this work, we used generative AI tools for formulating mathematical claims, providing critical ingredients for proving mathematical claims, assisting in the writing of proofs, proposing or refining hypotheses, designing or providing feedback on research methodology or experiments, implementing methods, supporting qualitative and thematic data analysis, and interpreting results. We have not used generative AI for generating synthetic data sets, developing theoretical models, or cleaning datasets; translation is not applicable to this work.

Additionally, we used AI for creating scientific figures, suggesting experimental parameters, writing/editing code, creating artifacts, drafting text, analyzing literature, searching for information, editing for readability, identifying literature, and formatting references.

We have reviewed all AI-assisted work. We take responsibility for the final content of this work, including text, claims, or artifacts produced with the aid of generative AI.

\bibliography{references}
\bibliographystyle{iclr2027_conference}

\appendix

\section{Velocity--score duality and the marginal-preserving SDE family}
\label{appendix:score_velocity}

This appendix establishes the structural facts on which the model-agnostic method rests: for an interpolant path the velocity and the score determine each other in closed form (Proposition~\ref{prop:score_velocity}), the probability-flow ODE belongs to a one-parameter family of SDEs sharing the same marginals (Proposition~\ref{prop:marginal_preservation}), and both velocity and score can be read off the trained network for each model class (Section~\ref{appendix:score_reading}). The first two are known~\citep{albergo_stochastic_2023, song_score-based_2021, ma_sit_2024}. We state and prove them in the notation of Section~\ref{subsec:flow_models} for completeness.

Throughout, consider the interpolant path of Eq.~\eqref{eq:general_interpolant},
\begin{align}\label{eq:app_path}
    \bx_\tau = \alpha_\tau\bx_0 + \beta_\tau\bx_1 + \sigpath_\tau\latentnoise,
    \qquad \latentnoise\sim\mathcal{N}(0,\bI)\ \text{independent of}\ \bx_1,
\end{align}
with $\bx_0$ deterministic, $\E\|\bx_1\|^2<\infty$, $\beta_\tau>0$ for $\tau\in(0,1]$, and schedules continuously differentiable with $\sigpath_\tau>0$ on the open interval $(0,1)$. All identities below are stated on $(0,1)$. Whenever the score is recovered from the velocity, we additionally require $\dot\beta_\tau\sigpath_\tau-\dot\sigpath_\tau\beta_\tau\neq0$. (For stochastic interpolants $\sigpath_\tau=\gamma_\tau\sqrt\tau$ vanishes and is not differentiable at $\tau=0$. The endpoints are covered by the initial condition and the terminal constraint $\bx_{\tau=1}=\bx_1$, consistent with the hypotheses of Theorem~\ref{theorem:unified_posterior}.) Let $p_\tau$ denote the marginal density of $\bx_\tau$ and $\genscore_\tau=\nabla_{\bx}\log p_\tau$ its score.

\begin{lemma}[Tweedie identities for the interpolant path]\label{lem:tweedie}
Writing $\bm m_\tau := \alpha_\tau\bx_0 + \beta_\tau\bx_1$, so that $\bx_\tau = \bm m_\tau + \sigpath_\tau\latentnoise$ with $\sigpath_\tau\latentnoise$ Gaussian with covariance $\sigpath_\tau^2\bI$ independent of $\bm m_\tau$,
\begin{align}
    \E[\bm m_\tau \mid \bx_\tau] &= \bx_\tau + \sigpath_\tau^2\,\genscore_\tau(\bx_\tau), \label{eq:tweedie_mean}\\
    \E[\latentnoise \mid \bx_\tau] &= -\sigpath_\tau\,\genscore_\tau(\bx_\tau), \label{eq:tweedie_eps}\\
    \E[\bx_1 \mid \bx_\tau] &= \tfrac{1}{\beta_\tau}\!\left(\bx_\tau + \sigpath_\tau^2\genscore_\tau(\bx_\tau) - \alpha_\tau\bx_0\right), \label{eq:tweedie_x1}\\
    \cov(\latentnoise \mid \bx_\tau) &= \bI + \sigpath_\tau^2\,\nabla_{\bx}\genscore_\tau(\bx_\tau). \label{eq:tweedie_cov}
\end{align}
\end{lemma}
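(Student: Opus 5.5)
The approach is the standard Gaussian-convolution (Tweedie) argument. Since $\bx_0$ is fixed, $\bm m_\tau=\alpha_\tau\bx_0+\beta_\tau\bx_1$ has some law $\mu_\tau$. Because $\sigpath_\tau>0$ on $(0,1)$, the marginal density is the convolution
\[
p_\tau(\bx)=\int \varphi_{\sigpath_\tau}(\bx-\bm m)\,\mu_\tau(\rd\bm m),
\qquad
\varphi_{\sigpath}(\bm z)=(2\pi\sigpath^2)^{-\statedim/2}\exp\!\bigl(-\|\bm z\|^2/(2\sigpath^2)\bigr).
\]
This density is smooth and strictly positive. Differentiation under the integral is justified by dominated convergence: the Gaussian kernel and its derivatives are bounded uniformly in $\bm m$, and $\E\|\bx_1\|^2<\infty$ supplies the needed integrability. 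By Bayes' rule, the conditional law of $\bm m_\tau$ given $\bx_\tau=\bx$ has density $\varphi_{\sigpath_\tau}(\bx-\bm m)/p_\tau(\bx)$ with respect to $\mu_\tau$.

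\textbf{Step 1: the mean identity \eqref{eq:tweedie_mean}.} Use $\nabla_{\bx}\varphi_{\sigpath}(\bx-\bm m)=-\sigpath^{-2}(\bx-\bm m)\varphi_{\sigpath}(\bx-\bm m)$. Then
\[
\nabla p_\tau(\bx)=-\sigpath_\tau^{-2}\,p_\tau(\bx)\,\bigl(\bx-\E[\bm m_\tau\mid\bx_\tau=\bx]\bigr).
\]
Dividing by $p_\tau$ gives $\genscore_\tau=\sigpath_\tau^{-2}(\E[\bm m_\tau\mid\bx_\tau]-\bx_\tau)$, which is \eqref{eq:tweedie_mean}.

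\textbf{Step 2: the linear consequences.} Equation \eqref{eq:tweedie_eps} follows from $\latentnoise=(\bx_\tau-\bm m_\tau)/\sigpath_\tau$ and linearity of conditional expectation. Equation \eqref{eq:tweedie_x1} follows from $\bx_1=(\bm m_\tau-\alpha_\tau\bx_0)/\beta_\tau$, using $\beta_\tau>0$ and that $\bx_0$ is deterministic.

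\textbf{Step 3: the covariance identity \eqref{eq:tweedie_cov}.} Differentiate once more. One has
\[
\nabla^2 p_\tau=\int\bigl(\sigpath_\tau^{-4}(\bx-\bm m)(\bx-\bm m)^\top-\sigpath_\tau^{-2}\bI\bigr)\varphi_{\sigpath_\tau}(\bx-\bm m)\,\mu_\tau(\rd\bm m).
\]
Divide by $p_\tau$ and use $\nabla\genscore_\tau=\nabla^2p_\tau/p_\tau-\genscore_\tau\genscore_\tau^\top$. This yields
\[
\nabla\genscore_\tau=\sigpath_\tau^{-4}\,\E\bigl[(\bx-\bm m_\tau)(\bx-\bm m_\tau)^\top\mid\bx_\tau=\bx\bigr]-\sigpath_\tau^{-2}\bI-\genscore_\tau\genscore_\tau^\top .
\]
The conditional second moment of $\bx-\bm m_\tau=\sigpath_\tau\latentnoise$ equals $\sigpath_\tau^2\bigl(\cov(\latentnoise\mid\bx_\tau)+\E[\latentnoise\mid\bx_\tau]\E[\latentnoise\mid\bx_\tau]^\top\bigr)$. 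By \eqref{eq:tweedie_eps}, the outer-product term is $\sigpath_\tau^2\genscore_\tau\genscore_\tau^\top$, so it cancels the $-\genscore_\tau\genscore_\tau^\top$ term above. Multiplying through by $\sigpath_\tau^2$ gives \eqref{eq:tweedie_cov}.

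The computations are routine. The only delicate point is the justification of differentiating twice under the integral and the finiteness of the conditional second moments. Both are handled by the uniform boundedness of $\|\bm z\|^k\varphi_{\sigpath}(\bm z)$ for $k\le 2$ at fixed $\sigpath_\tau>0$, combined with $p_\tau>0$. This is the step I would write out carefully. Everything is restricted to the open interval $(0,1)$, as stated in the paper.
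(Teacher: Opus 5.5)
Your proposal is correct and follows essentially the same route as the paper: first-order Tweedie for \eqref{eq:tweedie_mean}, linear rearrangement for \eqref{eq:tweedie_eps}--\eqref{eq:tweedie_x1}, and second-order Tweedie for \eqref{eq:tweedie_cov}. The one difference is that you derive both identities from the convolution representation, differentiating $p_\tau$ twice and using $\nabla\genscore_\tau=\nabla^2p_\tau/p_\tau-\genscore_\tau\genscore_\tau^\top$, whereas the paper cites Tweedie and differentiates the posterior mean. Your dominated-convergence justification is a useful addition, and it does not even need $\E\|\bx_1\|^2<\infty$, since the kernel derivatives are uniformly bounded and $\mu_\tau$ is a probability measure.
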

\begin{proof}
Equation~\eqref{eq:tweedie_mean} is Tweedie's formula for the additive-Gaussian-noise model $\bx_\tau = \bm m_\tau + \sigpath_\tau\latentnoise$. Since $\latentnoise=(\bx_\tau-\bm m_\tau)/\sigpath_\tau$, \eqref{eq:tweedie_eps} follows, and $\bx_1=(\bm m_\tau-\alpha_\tau\bx_0)/\beta_\tau$ gives \eqref{eq:tweedie_x1}. For \eqref{eq:tweedie_cov}, differentiate $\E[\bm m_\tau\mid\bx_\tau=\bx]$ in $\bx$: on the one hand, by \eqref{eq:tweedie_mean}, $\nabla_{\bx}\E[\bm m_\tau\mid\bx] = \bI + \sigpath_\tau^2\nabla_{\bx}\genscore_\tau$. On the other, differentiating the conditional Gaussian mean directly gives the standard second-order Tweedie identity $\nabla_{\bx}\E[\bm m_\tau\mid\bx] = \sigpath_\tau^{-2}\cov(\bm m_\tau\mid\bx)$. Hence $\cov(\bm m_\tau\mid\bx)=\sigpath_\tau^2\bI+\sigpath_\tau^4\nabla_{\bx}\genscore_\tau$, and $\cov(\latentnoise\mid\bx)=\sigpath_\tau^{-2}\cov(\bm m_\tau\mid\bx)$ (as $\latentnoise=(\bx-\bm m_\tau)/\sigpath_\tau$ with $\bx$ fixed) yields \eqref{eq:tweedie_cov}.
\end{proof}

\begin{proposition}[Velocity--score identity]\label{prop:score_velocity}
The marginal velocity and the score of the path \eqref{eq:app_path} satisfy the exact linear relation
\begin{align}\label{eq:general_velocity_of_score}
    \fmvel_\tau(\bx)
    = \frac{\dot\beta_\tau}{\beta_\tau}\bx
    + \left(\dot\alpha_\tau - \frac{\dot\beta_\tau\alpha_\tau}{\beta_\tau}\right)\bx_0
    + \sigpath_\tau\!\left(\frac{\dot\beta_\tau}{\beta_\tau}\sigpath_\tau - \dot\sigpath_\tau\right)\genscore_\tau(\bx),
\end{align}
where the score coefficient is the velocity--score coefficient $\vscoef_\tau=\sigpath_\tau(\tfrac{\dot\beta_\tau}{\beta_\tau}\sigpath_\tau-\dot\sigpath_\tau)$ of Eq.~\eqref{eq:general_score_velocity}. When $\dot\beta_\tau\sigpath_\tau-\dot\sigpath_\tau\beta_\tau\neq0$, this is equivalently
\begin{align}\label{eq:app_score_of_velocity}
    \genscore_\tau(\bx)
    = \frac{\beta_\tau\,\fmvel_\tau(\bx) - \dot\beta_\tau\,\bx - (\dot\alpha_\tau\beta_\tau - \dot\beta_\tau\alpha_\tau)\,\bx_0}
           {\sigpath_\tau\,(\dot\beta_\tau\sigpath_\tau - \dot\sigpath_\tau\beta_\tau)} .
\end{align}
\end{proposition}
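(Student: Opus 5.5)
The approach is to represent the marginal velocity as the conditional expectation of the pathwise time derivative of the interpolant, and then rewrite every conditional expectation through the Tweedie identities of Lemma~\ref{lem:tweedie}. For the path \eqref{eq:app_path}, the marginal density $p_\tau$ satisfies the continuity equation $\partial_\tau p_\tau+\nabla\cdot(p_\tau\fmvel_\tau)=0$ with
\begin{align*}
    \fmvel_\tau(\bx)=\E\!\left[\dot\alpha_\tau\bx_0+\dot\beta_\tau\bx_1+\dot\sigpath_\tau\latentnoise\,\middle|\,\bx_\tau=\bx\right].
\end{align*}
This is the standard characterisation; it is also the minimiser of \eqref{eq:velocity_loss}. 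I will verify it briefly by testing against a smooth compactly supported $\varphi$: differentiate $\E[\varphi(\bx_\tau)]$ in $\tau$ and use the tower property. The moment assumption $\E\|\bx_1\|^2<\infty$ together with $C^1$ schedules on $(0,1)$ justifies differentiating under the expectation.

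Next, since $\bx_0$ is deterministic, linearity gives $\fmvel_\tau(\bx)=\dot\alpha_\tau\bx_0+\dot\beta_\tau\E[\bx_1\mid\bx]+\dot\sigpath_\tau\E[\latentnoise\mid\bx]$. Substituting \eqref{eq:tweedie_x1} and \eqref{eq:tweedie_eps} yields
\begin{align*}
    \fmvel_\tau(\bx)=\dot\alpha_\tau\bx_0+\frac{\dot\beta_\tau}{\beta_\tau}\bigl(\bx+\sigpath_\tau^2\genscore_\tau-\alpha_\tau\bx_0\bigr)-\dot\sigpath_\tau\sigpath_\tau\genscore_\tau .
\end{align*}
Collecting the coefficients of $\bx$, $\bx_0$ and $\genscore_\tau$ gives exactly \eqref{eq:general_velocity_of_score}, with score coefficient $\sigpath_\tau(\tfrac{\dot\beta_\tau}{\beta_\tau}\sigpath_\tau-\dot\sigpath_\tau)=\vscoef_\tau$.

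For the inverse relation \eqref{eq:app_score_of_velocity}, I will multiply \eqref{eq:general_velocity_of_score} by $\beta_\tau$. This gives $\beta_\tau\fmvel_\tau=\dot\beta_\tau\bx+(\dot\alpha_\tau\beta_\tau-\dot\beta_\tau\alpha_\tau)\bx_0+\sigpath_\tau(\dot\beta_\tau\sigpath_\tau-\dot\sigpath_\tau\beta_\tau)\genscore_\tau$. Under the nondegeneracy assumption, and since $\sigpath_\tau>0$ on $(0,1)$, the scalar factor multiplying $\genscore_\tau$ is nonzero, so we can solve for $\genscore_\tau$.

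The only genuinely delicate step is the first one: identifying the marginal velocity with the conditional expectation, that is, checking that the continuity equation holds in the weak sense. The rest is algebra using Lemma~\ref{lem:tweedie}, which already ensures that $p_\tau$ is smooth and positive because of the Gaussian convolution with $\sigpath_\tau>0$. The endpoints are excluded, as stated.
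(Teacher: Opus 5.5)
Your proposal is correct and follows the paper's proof: both write $\fmvel_\tau$ as the conditional expectation of the pathwise derivative, substitute the Tweedie identities \eqref{eq:tweedie_x1} and \eqref{eq:tweedie_eps}, and collect coefficients; both then multiply by $\beta_\tau$ and divide by the nonvanishing factor $\sigpath_\tau(\dot\beta_\tau\sigpath_\tau-\dot\sigpath_\tau\beta_\tau)$ to obtain the inverse relation. The one difference is your added weak continuity-equation check, which the paper omits by taking the conditional expectation as the definition; that check shows this field is an admissible velocity but not the unique one, so it is the definition (equivalently, the minimiser of \eqref{eq:velocity_loss}) that actually fixes $\fmvel_\tau$.
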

\begin{proof}
By definition $\fmvel_\tau(\bx)=\E[\dot\alpha_\tau\bx_0 + \dot\beta_\tau\bx_1 + \dot\sigpath_\tau\latentnoise\mid\bx_\tau=\bx]$. Since $\bx_0$ is deterministic, substitute \eqref{eq:tweedie_x1} and \eqref{eq:tweedie_eps}:
\begin{align}
    \fmvel_\tau(\bx)
    &= \dot\alpha_\tau\bx_0
    + \frac{\dot\beta_\tau}{\beta_\tau}\!\left(\bx + \sigpath_\tau^2\genscore_\tau - \alpha_\tau\bx_0\right)
    - \dot\sigpath_\tau\sigpath_\tau\genscore_\tau \nonumber\\
    &= \frac{\dot\beta_\tau}{\beta_\tau}\bx
    + \left(\dot\alpha_\tau - \frac{\dot\beta_\tau\alpha_\tau}{\beta_\tau}\right)\bx_0
    + \left(\frac{\dot\beta_\tau}{\beta_\tau}\sigpath_\tau^2 - \dot\sigpath_\tau\sigpath_\tau\right)\genscore_\tau,
\end{align}
which is \eqref{eq:general_velocity_of_score}. Multiplying the isolated score term by $\beta_\tau$ and rearranging yields \eqref{eq:app_score_of_velocity}.
\end{proof}

\begin{proposition}[Marginal-preserving SDE family]\label{prop:marginal_preservation}
Let $p_\tau$ be the marginals of the path \eqref{eq:app_path} with velocity $\fmvel_\tau$ and score $\genscore_\tau$, and suppose $p\in C^{2}((0,1)\times\R^{\statedim})$. Let $\gdiff_\tau\ge0$ be measurable with $\gdiff\in L^2([0,1])$. Assume that the SDE
\begin{align}\label{eq:app_sde}
    \rd\bX_\tau = \left[\fmvel_\tau(\bX_\tau) + \tfrac12\gdiff_\tau^2\genscore_\tau(\bX_\tau)\right]\rd\tau + \gdiff_\tau\,\rd\bW_\tau,
    \qquad \bX_0\sim p_0,
\end{align}
and its Fokker--Planck equation are well posed with unique solutions for the initial law $p_0$ (allowing the point mass $p_0=\delta_{\bx_0}$ used by SI, which has no density). Then the SDE has marginal law $p_\tau$ for all $\tau\in[0,1]$. Setting $\gdiff_\tau=0$ gives the probability-flow ODE whenever that ODE is well posed.
\end{proposition}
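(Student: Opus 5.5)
The plan is to show that $p_\tau$ solves the Fokker--Planck equation of \eqref{eq:app_sde}, and then to invoke the assumed uniqueness to identify the SDE marginals with $p_\tau$. The argument runs in three steps.

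\emph{Step 1: continuity equation for the path.} For any test function $\varphi\in C_c^\infty(\R^{\statedim})$ and $\tau\in(0,1)$, differentiate $\E[\varphi(\bx_\tau)]$ under the expectation, using the $C^1$ schedules and $\E\|\bx_1\|^2<\infty$. This gives
\begin{align*}
\frac{\rd}{\rd\tau}\E[\varphi(\bx_\tau)]
=\E\big[\nabla\varphi(\bx_\tau)\cdot(\dot\alpha_\tau\bx_0+\dot\beta_\tau\bx_1+\dot\sigpath_\tau\latentnoise)\big].
\end{align*}
Conditioning on $\bx_\tau$ turns the right-hand side into $\E[\nabla\varphi(\bx_\tau)\cdot\fmvel_\tau(\bx_\tau)]$, since $\fmvel_\tau$ is by definition the conditional expectation of the interpolant time-derivative (as used in Proposition~\ref{prop:score_velocity}). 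It follows that $p_\tau$ is a weak solution, and by the $C^2$ assumption a classical one, of
\begin{align*}
\partial_\tau p_\tau+\nabla\cdot(p_\tau\fmvel_\tau)=0 .
\end{align*}

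\emph{Step 2: insert the diffusion.} Since $p_\tau>0$ is $C^2$, we have $p_\tau\genscore_\tau=\nabla p_\tau$. Hence
\begin{align*}
\nabla\cdot\big(p_\tau\tfrac12\gdiff_\tau^2\genscore_\tau\big)=\tfrac12\gdiff_\tau^2\Delta p_\tau .
\end{align*}
Adding and subtracting this term in the continuity equation gives
\begin{align*}
\partial_\tau p_\tau=-\nabla\cdot\big(p_\tau[\fmvel_\tau+\tfrac12\gdiff_\tau^2\genscore_\tau]\big)+\tfrac12\gdiff_\tau^2\Delta p_\tau ,
\end{align*}
which is exactly the Fokker--Planck equation of \eqref{eq:app_sde}. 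The coefficient $\gdiff_\tau^2$ is only measurable, so the identity is meant in the integrated-in-time weak sense. The hypothesis $\gdiff\in L^2$ makes $\int_0^\tau\gdiff_s^2\,\rd s$ finite, so the time integral of the added term is well defined.

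\emph{Step 3: identification and endpoints.} The law of the SDE solution also solves this Fokker--Planck equation weakly, via It\^o's formula applied to $\varphi$. Both solutions start from $p_0$ at $\tau=0$. For SI that initial law is $\delta_{\bx_0}$, and the interpolant law converges to it as $\tau\to0$ because $\beta_0=\sigpath_0=0$ and $\alpha_0=1$. The assumed uniqueness then gives equality of the two laws for all $\tau\in[0,1)$. At $\tau=1$, continuity of both laws in $\tau$ (the interpolant converges in $L^2$ to $\bx_1$) extends the equality to the endpoint. Setting $\gdiff\equiv0$ gives the probability-flow ODE case verbatim.

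The main obstacle is the endpoint behaviour and the regularity bookkeeping. Near $\tau=0$ the coefficients may be singular; for SI, $\dot\sigpath_\tau\sim\tau^{-1/2}$. I would handle this by working on $[\epsilon,1-\epsilon]$, using the assumed well-posedness on the whole interval, and passing $\epsilon\to0$ through weak continuity of the marginals. Integrability of $\dot\sigpath$ near $0$ holds because $\sigpath$ is continuous and monotone there, which is enough to control the time-derivative in Step 1.
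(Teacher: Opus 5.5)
Your proposal is correct and follows the same route as the paper: the identity $p_\tau\genscore_\tau=\nabla p_\tau$ cancels the added score drift against the Laplacian term, which reduces the Fokker--Planck equation to the interpolant's continuity equation, and the assumed uniqueness then identifies the SDE marginals with $p_\tau$. The paper takes the continuity equation as given and does not discuss the endpoints; your derivation of it from the interpolant, and your handling of $\tau=0$ and $\tau=1$ via weak continuity and integrability of $\dot\sigpath$, supply those details without changing the argument.
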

\begin{proof}
Since $\genscore_\tau=\nabla\log p_\tau$, we have
$p_\tau\genscore_\tau=\nabla p_\tau$. Hence the score drift and diffusion cancel in the Fokker--Planck equation:
\begin{align*}
    -\nabla\!\cdot\!\left(
        \left[\fmvel_\tau+\tfrac12\gdiff_\tau^2\genscore_\tau\right]p_\tau
    \right)
    +\tfrac12\gdiff_\tau^2\Delta p_\tau
    =-\nabla\!\cdot\!(\fmvel_\tau p_\tau)
    =\partial_\tau p_\tau,
\end{align*}
where the last equality is the interpolant's continuity equation.
Thus $p_\tau$ solves the Fokker--Planck equation with initial law $p_0$
(interpreted weakly when $p_0$ is a point mass).
Uniqueness gives the claimed marginals. Taking $\gdiff_\tau=0$ yields
the probability-flow ODE.
\end{proof}

\subsection{Reading the velocity and score from the trained network}
\label{appendix:score_reading}

The two model classes train different objects. Proposition~\ref{prop:score_velocity} converts either into the pair $(\fmvel_\tau,\genscore_\tau)$ that the samplers use.

\paragraph{Flow matching.}
The trained object is the velocity $\fmvel_\theta$ itself, and the path has $\alpha_\tau=0$ with source schedule $\sigpath_\tau$. Specialising \eqref{eq:app_score_of_velocity},
\begin{align}\label{eq:fm_score}
    \genscore_\tau(\bx) = \frac{\beta_\tau\,\fmvel_\theta(\bx,\state^{n-1},\tau) - \dot\beta_\tau\,\bx}{\sigpath_\tau\,(\dot\beta_\tau\sigpath_\tau - \dot\sigpath_\tau\beta_\tau)},
\end{align}
which for rectified flow ($\sigpath_\tau=1-\tau$, $\beta_\tau=\tau$) reduces to $\genscore_\tau(\bx) = (\tau\,\fmvel_\theta - \bx)/(1-\tau)$. Note that \eqref{eq:fm_score} degenerates as $\tau\to1$ (where $\sigpath_\tau\to0$), which is why the DM-SDE lift uses an endpoint-vanishing diffusion $\gdiff_\tau$.

\paragraph{Stochastic interpolants.}
The trained object is the SDE drift $\drift_\tau = \E[\dot\alpha_\tau\bx_0 + \dot\beta_\tau\bx_1 + \dot\gamma_\tau\bW_\tau\mid\bX_\tau=\bx]$ of Eq.~\eqref{eq:si_sde}, with $\sigpath_\tau=\gamma_\tau\sqrt\tau$. It decomposes as $\drift_\tau=\fmvel_\tau+\tfrac12\gamma_\tau^2\genscore_\tau$~\citep{chen_probabilistic_2024}, which follows from Lemma~\ref{lem:tweedie}: since $\gamma_\tau\bW_\tau:=\sigpath_\tau\latentnoise$ at fixed $\tau$, Eq.~\eqref{eq:tweedie_eps} gives
\begin{align*}
    \E[\dot\gamma_\tau\bW_\tau\mid\bX_\tau=\bx]
    = \tfrac{\dot\gamma_\tau}{\gamma_\tau}\,\E[\gamma_\tau\bW_\tau\mid\bX_\tau=\bx]
    = -\dot\gamma_\tau\gamma_\tau\tau\,\genscore_\tau(\bx),
\end{align*}
while the noise term of the drift carries $\dot\sigpath_\tau\,\E[\latentnoise\mid\bX_\tau=\bx] = -\dot\sigpath_\tau\sigpath_\tau\,\genscore_\tau(\bx)$ with $\dot\sigpath_\tau\sigpath_\tau = \tfrac12\tfrac{\rd}{\rd\tau}(\gamma_\tau^2\tau) = \dot\gamma_\tau\gamma_\tau\tau + \tfrac12\gamma_\tau^2$. The remaining terms coincide, so $\drift_\tau - \fmvel_\tau = \tfrac12\gamma_\tau^2\genscore_\tau$. This is also the reconciliation of the two noise parametrisations: the $\dot\gamma_\tau\bW_\tau$ regression target (the It\^o drift of $\rd(\gamma_\tau\bW_\tau)$ for a genuine Wiener process) and the pathwise derivative $\dot\sigpath_\tau\latentnoise$ of the equal-in-law source $\sigpath_\tau\latentnoise$ differ, upon conditioning, by exactly the score term that the SDE's diffusion restores. Adding $\tfrac12\gamma_\tau^2\genscore_\tau$ to \eqref{eq:general_velocity_of_score} and solving for the score,
\begin{align}\label{eq:SI_score}
    \genscore_\tau(\bx,\bx_0) = A_\tau\!\left[\beta_\tau\drift_\tau(\bx,\bx_0,\tau) - \bm c_\tau(\bx,\bx_0)\right],
\end{align}
with $A_\tau = \left[\tau\gamma_\tau(\dot\beta_\tau\gamma_\tau - \beta_\tau\dot\gamma_\tau)\right]^{-1}$ and $\bm c_\tau(\bx,\bx_0)=\dot\beta_\tau\bx + (\beta_\tau\dot\alpha_\tau - \dot\beta_\tau\alpha_\tau)\bx_0$ as stated by~\citet{chen_probabilistic_2024}. The velocity, when needed separately, is $\fmvel_\tau=\drift_\tau-\tfrac12\gamma_\tau^2\genscore_\tau$.

\section{Conditioning the interpolant path: proofs}
\label{appendix:proof_conditioning}

Throughout, $\bx_0$ is fixed and
$\obs=\obsoperator(\bx_1)+\obsnoise$, with $\obsnoise$ independent
of $(\bx_1,\latentnoise)$ given $\bx_0$.

\subsection{Posterior path}

\begin{proposition}[Conditioning identities]\label{prop:conditional_path}
The conditioned path retains the interpolant representation
\eqref{eq:general_interpolant}, with the endpoint drawn from the
posterior:
\begin{align}\label{eq:conditional_path}
\bx_\tau=\alpha_\tau\bx_0+\beta_\tau\bx_1+\sigpath_\tau\latentnoise,
    \qquad
    \bx_1\sim\conddist{\bx_1}{\obs,\bx_0},
    \quad \latentnoise\sim\mathcal{N}(0,\bI),
\end{align}
with independent $\bx_1$ and $\latentnoise$. Denoting its score
and velocity by $\genscore_\tau^\star$ and $\fmvel_\tau^\star$,
respectively, we have, for $\tau\in(0,1)$,
\begin{align}
    \genscore_\tau^\star
    &= \genscore_\tau+\nabla_{\bx}\log\Phi_\tau^{\obs},
    \label{eq:conditional_score}\\
    \fmvel_\tau^\star
    &= \fmvel_\tau+\vscoef_\tau\nabla_{\bx}\log\Phi_\tau^{\obs},
    \label{eq:conditional_velocity}
\end{align}
where arguments $(\bx,\bx_0)$ are suppressed,
$\Phi_\tau^{\obs}$ is defined in \eqref{eq:likelihood_tilt},
and $\vscoef_\tau$ in \eqref{eq:general_score_velocity}.
\end{proposition}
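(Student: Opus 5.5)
The plan has three steps: first the representation \eqref{eq:conditional_path}, then the score shift \eqref{eq:conditional_score} by Bayes' rule, and finally the velocity shift \eqref{eq:conditional_velocity} by applying Proposition~\ref{prop:score_velocity} to the conditioned path.

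\textbf{Step 1: conditioned path.} Conditioning on $\obs$ only reweights the endpoint. The joint law of $(\bx_1,\latentnoise,\obs)$ given $\bx_0$ factorises as $\conddist{\bx_1}{\bx_0}\,\mathcal{N}(\latentnoise;0,\bI)\,\conddist{\obs}{\bx_1}$. This uses the assumption that $\obsnoise$ is independent of $(\bx_1,\latentnoise)$ given $\bx_0$, so that $\obs$ depends on $(\bx_1,\latentnoise)$ only through $\bx_1$. Dividing by $\conddist{\obs}{\bx_0}$ shows that, given $(\obs,\bx_0)$, we have $\bx_1\sim\conddist{\bx_1}{\obs,\bx_0}$ and $\latentnoise\sim\mathcal{N}(0,\bI)$, still independent. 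Since the schedules $\alpha_\tau,\beta_\tau,\sigpath_\tau$ and the anchor $\bx_0$ are deterministic, $\bx_\tau=\alpha_\tau\bx_0+\beta_\tau\bx_1+\sigpath_\tau\latentnoise$ is again an interpolant of the form \eqref{eq:app_path}. Its only change is the terminal law. This is \eqref{eq:conditional_path}.

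\textbf{Step 2: score.} The marginal density of the conditioned path is $p_\tau^\star(\bx)=\conddist{\bx_\tau=\bx}{\obs,\bx_0}$. By Bayes' rule,
\[
p_\tau^\star(\bx)=\frac{\conddist{\obs}{\bX_\tau=\bx,\bx_0}\,p_\tau(\bx)}{\conddist{\obs}{\bx_0}}=\frac{\Phi_\tau^{\obs}(\bx,\bx_0)\,p_\tau(\bx)}{\conddist{\obs}{\bx_0}}.
\]
The second equality in \eqref{eq:likelihood_tilt} follows from the Markov structure $\obs\perp\bx_\tau\mid(\bx_1,\bx_0)$, which again comes from the independence of $\obsnoise$. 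The denominator does not depend on $\bx$. Taking $\nabla_{\bx}\log$ therefore gives \eqref{eq:conditional_score}, provided $\Phi_\tau^{\obs}>0$ and $\Phi_\tau^{\obs}$ is differentiable on $(0,1)$. For $\sigpath_\tau>0$ this holds because $p_\tau$ is a Gaussian convolution and $\Phi_\tau^{\obs}p_\tau$ is as well.

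\textbf{Step 3: velocity.} Proposition~\ref{prop:score_velocity} relies only on the structure \eqref{eq:app_path} with the same schedules and deterministic $\bx_0$, so it applies verbatim to the conditioned path once $\E[\|\bx_1\|^2\mid\obs,\bx_0]<\infty$. Writing \eqref{eq:general_velocity_of_score} for $\fmvel_\tau^\star$ in terms of $\genscore_\tau^\star$, and again for $\fmvel_\tau$ in terms of $\genscore_\tau$, the linear terms in $\bx$ and $\bx_0$ are identical. Subtracting the two expressions gives
\[
\fmvel_\tau^\star-\fmvel_\tau=\vscoef_\tau\,(\genscore_\tau^\star-\genscore_\tau),
\]
and Step 2 then yields \eqref{eq:conditional_velocity}.

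The main obstacle I expect is the technical justification in Step 3. We must confirm that the Tweedie identities of Lemma~\ref{lem:tweedie} hold under the conditional law. This requires the posterior second moment to be finite, which I would obtain from $\conddist{\obs}{\bx_1}$ being bounded (it is a Gaussian density in $\obs$) together with $\conddist{\obs}{\bx_0}>0$. It also requires that differentiating under the integral in Bayes' formula is legitimate. Everything else is bookkeeping.
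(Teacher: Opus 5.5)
Your proposal is correct and follows the paper's proof: conditioning changes only the endpoint law, Bayes' rule with the $\bx$-independent normaliser $\conddist{\obs}{\bx_0}$ gives the score shift, and subtracting the velocity--score identity for the two paths, which share the same schedules, gives the velocity shift. The regularity checks you add (finite posterior second moment from boundedness of the Gaussian likelihood, positivity and smoothness of $\Phi_\tau^{\obs}$ for $\sigpath_\tau>0$) are sound, and they make explicit what the paper leaves implicit.
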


\begin{proof}
Since $\latentnoise$ is independent of $(\bx_1,\obs)$ given $\bx_0$,
conditioning changes only the law of $\bx_1$, giving
\eqref{eq:conditional_path}. By Bayes' rule,
\begin{align}\label{eq:posterior_marginal}
    p_\tau(\bx\mid\obs, \bx_0)
    =\frac{\Phi_\tau^{\obs}(\bx,\bx_0)\,p_\tau(\bx\mid\bx_0)}
    {\conddist{\obs}{\bx_0}}.
\end{align}
Taking $\nabla_{\bx}\log$ gives \eqref{eq:conditional_score}.
Applying the velocity--score identity
\eqref{eq:general_velocity_of_score} to both paths, whose schedules
and source coincide, then gives \eqref{eq:conditional_velocity}.
\end{proof}

\subsection{Proof of Theorem~\ref{theorem:unified_posterior}}

\begin{proof}
By Proposition~\ref{prop:conditional_path}, the posterior marginals
form an interpolant path. Under the theorem's well-posedness
assumptions, Proposition~\ref{prop:marginal_preservation} therefore
gives a sampler with these marginals:
\begin{align*}
    \rd\bXstar_\tau
    = \left[\fmvel_\tau^\star(\bXstar_\tau,\bx_0)
    + \tfrac12\gdiff_\tau^2\,\genscore_\tau^\star(\bXstar_\tau,\bx_0)\right]\rd\tau
    + \gdiff_\tau\,\rd\bW_\tau,
    \qquad \bXstar_0\sim p_0^{\obs}.
\end{align*}
Substituting \eqref{eq:conditional_score}--\eqref{eq:conditional_velocity}
into its drift and suppressing the common arguments gives
\begin{align*}
    \fmvel_\tau^\star+\tfrac12\gdiff_\tau^2\genscore_\tau^\star
    = \underbrace{\fmvel_\tau+\tfrac12\gdiff_\tau^2\genscore_\tau}_{\text{prior drift}}
    + \underbrace{\big(\vscoef_\tau+\tfrac12\gdiff_\tau^2\big)}_{\gweight_\tau}
      \nabla_{\bx}\log\Phi_\tau^{\obs},
\end{align*}
which is exactly \eqref{eq:unified_posterior_drift}. At $\tau=1$,
the interpolant equals $\bx_1$, so the sampler's terminal law is
$\conddist{\bx_1}{\obs,\bx_0}$.

Finally, the initial law is unchanged by conditioning. Since
$\beta_0=0$, the source
$\bx_{\tau=0}=\alpha_0\bx_0+\sigpath_0\latentnoise$ is independent
of $\obs$ given $\bx_0$. Thus
$\Phi_0^{\obs}=\conddist{\obs}{\bx_0}$ on the source support and
$p_0^{\obs}=p_0$. The sampler starts from $\mathcal{N}(0,\bI)$
for FM/DM and from $\delta_{\bx_0}$ for SI, including when the
source has no density.
\end{proof}

\section{Proof of Lemma \ref{lem:interpolated_likelihood} and Lemma~\ref{lem:source_moments}} \label{appendix:proof_interpolant_likelihood}

\begin{proof}[Proof of Lemma~\ref{lem:interpolated_likelihood}]
Recall the general state interpolant and the observation interpolant,
\begin{align}
  \bx_\tau &= \alpha_\tau \bx_0 + \beta_\tau \bx_1 + \src_\tau,
     \qquad \src_\tau = \sigpath_\tau\latentnoise, \label{eq:pf_state_interp}\\
  \interpolantobs_\tau &= \alpha_\tau \obsmatrix\bx_0 + \beta_\tau \obs, \label{eq:pf_obs_interp}
\end{align}
where $\obs = \obsmatrix\bx_1 + \obsnoise$ with $\obsnoise \sim \mathcal{N}(0,\obscov)$ independent of the source driver $\latentnoise$. Substituting $\obs = \obsmatrix\bx_1 + \obsnoise$ into \eqref{eq:pf_obs_interp},
\begin{equation}
  \interpolantobs_\tau
  = \obsmatrix(\alpha_\tau\bx_0 + \beta_\tau\bx_1) + \beta_\tau\obsnoise .
\end{equation}
From~\eqref{eq:pf_state_interp}, $\alpha_\tau\bx_0 + \beta_\tau\bx_1 = \bx_\tau - \src_\tau$, giving
\begin{equation}\label{eq:pf_ybar_decomp}
  \interpolantobs_\tau
  = \obsmatrix\bx_\tau
    + \underbrace{\beta_\tau\obsnoise - \obsmatrix\src_\tau}_{=\,\interpolantobsnoise_\tau},
\end{equation}
which is Eq.~\eqref{eq:ybar_decomp}. Since $\obsnoise$ is independent of $\bx_1$ and of the source driver, $\obsnoise \perp (\src_\tau, \bx_\tau) \mid \bx_0$, so
\begin{equation}
  \E[\obsnoise \mid \bx_\tau, \bx_0] = 0,
  \qquad
  \cov(\obsnoise \mid \bx_\tau, \bx_0) = \obscov .
\end{equation}
Taking conditional expectation of \eqref{eq:pf_ybar_decomp},
\begin{align}
  \E[\interpolantobs_\tau \mid \bx_\tau, \bx_0]
  = \obsmatrix\bx_\tau + \beta_\tau\,\E[\obsnoise \mid \bx_\tau, \bx_0]
    - \obsmatrix\,\E[\src_\tau \mid \bx_\tau, \bx_0]
  = \obsmatrix\bx_\tau - \obsmatrix\,\srcmean_\tau,
\end{align}
which is the mean in \eqref{eq:likelihood_moments}. Because the cross terms between $\obsnoise$ and $\src_\tau$ vanish upon conditioning,
\begin{align}
  \cov(\interpolantobs_\tau \mid \bx_\tau, \bx_0)
  = \beta_\tau^2\,\cov(\obsnoise \mid \bx_\tau, \bx_0)
    + \obsmatrix\,\cov(\src_\tau \mid \bx_\tau, \bx_0)\,\obsmatrix^\top
  = \beta_\tau^2\obscov + \obsmatrix\,\srccov_\tau\,\obsmatrix^\top,
\end{align}
which is the covariance in \eqref{eq:likelihood_moments}.
\end{proof}

\begin{proof}[Proof of Lemma~\ref{lem:source_moments}]
The source process is $\src_\tau = \sigpath_\tau\latentnoise$, so $\srcmean_\tau = \sigpath_\tau\,\E[\latentnoise\mid\bx_\tau,\bx_0]$ and $\srccov_\tau = \sigpath_\tau^2\,\cov(\latentnoise\mid\bx_\tau,\bx_0)$. The Tweedie identities of Lemma~\ref{lem:tweedie} give $\E[\latentnoise\mid\bx_\tau]=-\sigpath_\tau\genscore_\tau$ \eqref{eq:tweedie_eps} and $\cov(\latentnoise\mid\bx_\tau)=\bI+\sigpath_\tau^2\nabla_{\bx}\genscore_\tau$ \eqref{eq:tweedie_cov}, hence
\begin{align}
    \srcmean_\tau = -\sigpath_\tau^2\,\genscore_\tau(\bx,\bx_0),
    \qquad
    \srccov_\tau = \sigpath_\tau^2\,\bI + \sigpath_\tau^4\,\nabla_{\bx}\genscore_\tau(\bx,\bx_0),
\end{align}
which is Eq.~\eqref{eq:source_moments}.
\end{proof}

\section{Implementation details}
\label{appendix:implementation}

\paragraph{Interpolant schedules.}
SI priors use the quadratic schedule $\alpha_\tau=1-\tau$, $\beta_\tau=\tau^2$, $\gamma_\tau=1-\tau$ (so $\sigpath_\tau=(1-\tau)\sqrt\tau$), and the FM priors use linear schedules with only a noise source term, $\alpha_\tau=0$, $\beta_\tau=\tau$, $\sigma_\tau=1-\tau$.

\paragraph{Pseudo-time integration.}
Sampling uses $M$ uniform pseudo-time steps: Euler--Maruyama for the SDE members and forward Euler for the guided ODE.

\paragraph{Architecture.}
The field-case priors (SI drift and FM velocity networks alike) are conditional U-Nets: four resolution levels with two ConvNeXt-style blocks per level (inner channel multiplier 2), circular padding, and multi-head spatial attention (four heads, rotary positional embedding) in the bottleneck. Pseudo-time enters through a learned scalar embedding, and the five most recent physical states are supplied as additional input channels to the first convolution, realising the conditioning on $\state^{n-1}$ (and a short history). The initial conditioning history consists of five consecutive states from the trajectory supplying the initial condition. Thereafter, each ensemble member updates its own history with its generated states. Channel widths per level: Navier--Stokes $(8,16,32,64)$ for the SI prior and $(16,32,64,128)$ for the FM prior, one field channel (vorticity). Urban $(32,64,128,256)$, four field channels (three velocity components and temperature), with solid cells excluded via the building mask. Hyperparameters were chosen based on hyperparameter tuning.

\paragraph{Training.}
All priors are trained by simulation-free regression of their target (SI drift or FM velocity) under an MSE loss, with per-channel standardised fields: AdamW (learning rate $10^{-4}$, weight decay $10^{-6}$) with cosine annealing to $10^{-5}$, batch size 32, up to $10^{3}$ epochs with early stopping, dropout $0.1$, adaptive EMA-based gradient clipping, and mixed precision. Navier--Stokes uses 160 training trajectories and 20 validation trajectories for the early stopping. The urban case uses 150 LES runs for training and 20 runs for validation. The same trained FM checkpoint per case serves every FM and DM sampler and the same trained SI checkpoint serve every SI sampler. No prior is ever retrained or fine-tuned for assimilation.

\section{Evaluation metrics}
\label{appendix:metrics}

All metrics are computed per physical step from the posterior ensemble $\{\bx^{(e)}\}_{e=1}^{E}$ against the truth $\bx^{*}$ (or the exact posterior in the analytical case) and then averaged over steps and trajectories. Spatial averages run over the kept grid points $i\in\mathcal{I}$ (in the urban case, solid cells are excluded); $\bar\bx = \frac{1}{E}\sum_{e}\bx^{(e)}$ denotes the ensemble mean. The urban per-variable metrics (velocity, temperature) apply the same formulas with $\mathcal{I}$ restricted to the channels of that variable.

\paragraph{Ensemble-mean RMSE.} The RMSE of the ensemble mean,
\begin{align}
    \mathrm{RMSE} = \Big(\tfrac{1}{|\mathcal{I}|}\textstyle\sum_{i\in\mathcal{I}} \big(\bar x_i - x^{*}_i\big)^2\Big)^{1/2},
\end{align}
not the member-averaged RMSE, which would additionally count ensemble spread.

\paragraph{CRPS.} Per grid point, the unbiased ensemble estimator of the continuous ranked probability score~\citep{gneiting_strictly_2007},
\begin{align}
    \mathrm{CRPS}_i = \frac{1}{E}\sum_{e=1}^{E}\big|x^{(e)}_i - x^{*}_i\big|
    - \frac{1}{2E(E-1)}\sum_{e\neq e'}\big|x^{(e)}_i - x^{(e')}_i\big|,
\end{align}
averaged over $\mathcal{I}$; the $1/(E(E-1))$ normalisation makes the spread term unbiased for $\E|X-X'|$.

\paragraph{Spread--skill.} The ratio of corrected mean ensemble spread to skill,
\begin{align}
    \mathrm{ratio} = \sqrt{\tfrac{E+1}{E}}\,\Big(\tfrac{1}{|\mathcal{I}|}\textstyle\sum_{i\in\mathcal{I}} s_i\Big) \Big/\, \mathrm{RMSE},
\end{align}
with $s_i$ the ensemble standard deviation at point $i$ and $\sqrt{(E+1)/E}$ the finite-ensemble correction. We report $|1-\mathrm{ratio}|$, so $0$ is perfectly calibrated.

\paragraph{KL from the exact posterior (analytical case).} With sample moments $(\mu_p,\Sigma_p)$ of the method ensemble and exact posterior moments $(\mu_q,\Sigma_q)$, we report the Gaussian KL from the reference to the fitted ensemble distribution:
\begin{align}
    D_{\mathrm{KL}}(q\|p)
    = \tfrac{1}{2}\big(
    \operatorname{tr}(\Sigma_p^{-1}\Sigma_q)
    + (\mu_p-\mu_q)^\top\Sigma_p^{-1}(\mu_p-\mu_q)
    - d + \log\det\Sigma_p - \log\det\Sigma_q
    \big),
\end{align}
where $d$ is the state dimension. Because $\mu_p$ and $\Sigma_p$ are estimated from a finite ensemble, this reported KL is generally nonzero even for independent samples from the exact posterior. Increasing the number of integration steps alone does not remove this sampling error.

\paragraph{Sliced $W_1$.}
The 1-Wasserstein distance averaged over random one-dimensional projections: for unit directions $\theta_j$, $j=1,\dots,128$, the projected distance $W_1(\theta_j)$ is computed by integrating the absolute difference between the empirical quantile functions of the two projected sample sets, and we report $\tfrac{1}{128}\sum_{j=1}^{128} W_1(\theta_j)$.

\paragraph{Cost.} Wall-clock seconds per assimilation step, at matched ensemble size on the same hardware.

\paragraph{Enstrophy spectrum (Navier--Stokes).} The spectral diagnostic of the reconstructions, the radially averaged enstrophy spectrum $Z(k)$, is defined where it is used, in Eq.~\eqref{eq:ns_enstrophy_spectrum} of Appendix~\ref{subsec:ns_spectra}.

\section{Method descriptions and comparison to alternatives}\label{appendix:methods}

Our method shares building blocks with the baselines below but assembles them differently. For each method we state how the state is evolved, how the likelihood term is approximated, the per-step network cost, and which hyperparameters were tuned; quantitative comparisons are in Section~\ref{sec:results}. Tuned hyperparameters were selected by RMSE sweeps per observation scenario (and, where indicated, per step count $M$) on a held-out trajectory not used for evaluation; Table~\ref{tab:method_hparams} collects all sampler hyperparameters. Throughout, $\hat{\state}^1_\tau = \E[\bx_1\mid\bx_\tau,\state^0]$ denotes the one-step denoiser.

\paragraph{FlowDAS.}
FlowDAS~\citep{chen_flowdas_2025}, commits to the SI-SDE and estimates the likelihood score by Monte Carlo: the gradient of a softmax-weighted squared residual through $J$ one-step predictions $\hat{\state}^{1,j}_\tau$, entering the drift with a tuned constant step size $\zeta$ in place of our analytic weight $\gweight_\tau$. We instead read the score off the closed-form observation interpolant, removing sampling variance and per-step trajectory backprop; the trade-off is our Gaussian surrogate, whose bias is controlled by the covariance choice rather than the number of samples. \emph{Cost:} two forwards (prior drift and denoiser endpoint) and one backward per step, independent of $J$. \emph{Key limitation:} $\zeta$ is scenario-dependent and may require tuning, and there is no inflated source covariance. In our implementation, stability can also depend on the solver step count $M$, and overly large $\zeta$ can lead to divergence. \emph{Tuned:} the guidance step size $\zeta$, per scenario and $M$; the number of one-step samples is fixed at $J=25$.

\paragraph{Guided FM (FIG).}
FIG~\citep{yan_fig_2024} guides the deterministic FM probability-flow ODE for static linear inverse problems: after each prior-velocity step, $k$ inner corrector steps push the state along $\obsmatrix^\top$ times the \emph{normalised} observation residual with magnitude $c\,(1-\tau)/\tau$, using a measurement interpolant equivalent to ours. \emph{Cost:} one forward per step, no backward (the corrector differentiates only through the linear $\obsmatrix$) -- the cheapest guided sampler in the comparison. \emph{Key limitation:} treats $\bx_\tau\approx\bx_1$, uses no covariance, and the corrector is not derived from a probability path, so the guidance magnitude $c(1-\tau)/\tau$ is not data-adaptive. \emph{Tuned:} the corrector scale $c$, per scenario and $M$; the number of inner corrector steps $k$ is set by the observation operator (sweep-selected: $k=1$ for super-resolution, $k=3$ for sparse sensors), and the measurement-interpolant noise is fixed to zero as in the original work.

\paragraph{SDA.}
SDA~\citep{rozet_score-based_2023} applies Tweedie-based guidance on a diffusion prior: the gradient of a Gaussian likelihood at the Tweedie denoiser, taken through the network, with a $\Pi$GDM-style inflated isotropic covariance scaled by a tunable factor $\gamma_{\mathrm{sda}}$. Its original form composes locally learned segment scores into a joint trajectory score and conditions all states at once. We use its single-window form and are autoregressive, keeping per-step cost and prior dimension fixed as the horizon grows (but not propagating cross-window correlations as a joint smoother would). \emph{Cost:} two forwards and one backward per step. \emph{Tuned:} the covariance inflation $\gamma_{\mathrm{sda}}$, per scenario and $M$. \emph{Key limitation:} SDA is tied to the native reverse diffusion. Its bias therefore stems entirely from the Gaussian--Tweedie likelihood surrogate, whose isotropic covariance carries no model source information. Because the guidance weight is inherited from the fixed diffusion rather than derived from a probability path, it does not extend to samplers with modified or vanishing diffusion ($\gweight_\tau\neq\gdiff_\tau^2$), unlike Theorem~\ref{theorem:unified_posterior}.

\paragraph{D-Flow SGLD.}
Original D-Flow~\citep{ben-hamu_d-flow_2024} optimizes the FM source $\bx_0$ by gradient descent through the full FM-ODE rollout, with prior regularization supplied implicitly by the learned flow. D-Flow SGLD~\citep{parikh_d-flow_2026}, used here, instead runs a preconditioned SGLD chain over $\bx_0\sim\mathcal{N}(0,I)$ through the rollout (6-step RK2, backpropagated with gradient checkpointing), with energy given by the squared data misfit plus an explicit $\ell_2$ source penalty $\lambda\|\bx_0\|^2$; zero Langevin noise recovers MAP-style source optimization. \emph{Cost:} the entire rollout repeats at every SGLD step ($12$ forwards and $12$ backwards per step at our settings) -- the only baseline backpropagating through a full multi-step generative rollout, and by far the most expensive. \emph{Tuned:} the SGLD step size $\eta$ and the source penalty $\lambda$, per case (in our sweeps $\eta$ is the dominant knob); the Langevin noise scale and the preconditioner constants follow the original paper.

\paragraph{SURGE.}
SURGE~\citep{wei_surge_2026} is a sequential Monte-Carlo wrapper: a guided reverse-SDE proposal with Girsanov-corrected importance weights that telescope to the exact terminal likelihood, making the target asymptotically exact as the number of particles $N\to\infty$. The guidance controls proposal efficiency and therefore finite-$N$ degeneracy. In our implementation, particles resample when the effective sample size drops below $N/2$. We report it wrapping the FlowDAS and SDA guidance. \emph{Measured implementation cost:} one additional denoiser forward per step relative to the wrapped method, with no additional backward pass. \emph{Tuned:} nothing SURGE-specific -- the resampling threshold is fixed at $\mathrm{ESS}<N/2$, and the wrapped guidance keeps the hyperparameters tuned for its standalone counterpart ($\zeta$, $\gamma_{\mathrm{sda}}$). \emph{Key limitation:} particle degeneracy in high dimension.

\paragraph{Diffusion prior for DM-based methods.}
SDA, SURGE, and DM-SDE require a diffusion prior; we construct it from the trained FM velocity via the velocity--score identity (Appendix~\ref{appendix:score_velocity}), so all generative methods use the same network.

\begin{table}[ht]
    \centering
    \small
    \caption{Sampler hyperparameters for all methods. \emph{Tuned} entries were selected by RMSE sweeps on a held-out trajectory, per observation scenario and -- where a range is given -- per step count $M$; ranges span the values used across the scenario/$M$ grid. \emph{Fixed} entries follow the original works. In the linear--Gaussian case our samplers use the exact covariance without damping ($\lambda=1$) and the baselines reuse the Navier--Stokes tuning (FIG: $k=1$, $c=1$). The full per-scenario, per-$M$ matrices are provided in the configuration files released with the code.}
    \label{tab:method_hparams}
    \resizebox{\ifdim\width>\textwidth\textwidth\else\width\fi}{!}{%
    \begin{tabular}{llllll}
        \toprule
        Method & Hyperparameter & Role & Selection & Navier--Stokes & Urban \\
        \midrule
        Ours (all samplers) & $\lambda$ & Jacobian damping (shared cov.) & tuned & $0.9$ (SR) / $0.95$ (sparse) & $0.95$ \\
                            & $k$ & Jacobian refresh cadence & fixed (cost dial) & $10$ & $10$ \\
        \midrule
        FlowDAS & $\zeta$ & guidance step size & tuned & $2\times10^{-3}$--$0.15$ & $5\times10^{-4}$ \\
                & $J$ & Monte-Carlo $\bx_1$ samples & fixed & $25$ & $25$ \\
        \midrule
        Guided FM (FIG) & $c$ & corrector step-size scale & tuned & $5$--$20$ & $160$ \\
                        & $k$ & inner corrector steps & operator type & $1$ (SR) / $3$ (sparse) & $3$ \\
                        & $w$ & measurement-interpolant noise & fixed & $0$ & $0$ \\
        \midrule
        SDA & $\gamma_{\mathrm{sda}}$ & likelihood covariance inflation & tuned & $10^{-4}$--$10^{-2}$ & $5\times10^{-4}$ \\
        \midrule
        D-Flow SGLD & $\eta$ & SGLD step size & tuned & $5\times10^{-3}$ & $5\times10^{-3}$ \\
                    & $\lambda$ & source penalty & tuned & $10^{-4}$ & $10^{-3}$ \\
                    & $s$ & Langevin noise scale & fixed & $10^{-3}$ & $10^{-3}$ \\
                    & $\omega$, $\delta$ & pSGLD preconditioner & fixed & $0.99$, $10^{-3}$ & $0.99$, $10^{-3}$ \\
                    & $K$ & SGLD steps (6-step RK2 rollout) & $=M$ & $M$ & $M$ \\
        \midrule
        SURGE & ESS threshold & resampling trigger & fixed & $N/2$ & $N/2$ \\
              & $\zeta$, $\gamma_{\mathrm{sda}}$ & wrapped guidance & carried over & as above & as above \\
        \midrule
        EnKF & inflation / localisation & analysis regularisation & none & $1.0$ / none & -- \\
        Particle filter & resampling & scheme & fixed & systematic & -- \\
        \bottomrule
    \end{tabular}}
\end{table}

\section{Analytical test case: setup and additional results}
\label{appendix:analytical_case}

\subsection{Setup}
\label{appendix:analytical_setup}
The analytical test case is the linear-Gaussian system with $\statedim=\obsdim=2$
\begin{align}
    \conddist{\state^1}{\state^0} = \mathcal{N}(\state^0, \bI),
    \qquad
    \conddist{\obs^1}{\state^1} = \mathcal{N}(\obsmatrix\state^1, \obscov),
    \qquad \obsmatrix = \bI,\ \obscov = \bI,
\end{align}
i.e.\ Eq.~\eqref{eq:data_assimilation_problem} with identity forward and observation operators and unit Gaussian noise.

For a Gaussian conditional target $\conddist{\bx_1}{\bx_0}=\mathcal{N}(\mean_1(\bx_0), \covmatrix_1)$, the SI drift is available in closed form (Proposition B.9 of~\citealp{chen_probabilistic_2024}). Specialised to $\mean_1(\bx_0)=\bx_0$ and $\covmatrix_1=\bI$, it reads
\begin{align}
    \drift(\bx, \bx_0, \tau) &= \dot{\alpha}_\tau \bx_0 + \dot{\beta}_\tau\mean_1(\bx_0)  + (\beta_\tau \dot{\beta}_\tau + \tau \gamma_\tau \dot{\gamma}_\tau)\, \bar{\covmatrix}_\tau^{-1}(\bx - \bar{\mean}_\tau(\bx_0)),
\end{align}
where
\begin{align}
    \bar{\mean}_\tau(\bx_0) = \alpha_\tau \bx_0 + \beta_\tau \mean_1(\bx_0), \quad \bar{\covmatrix}_\tau = (\beta_\tau^2 + \tau \gamma_\tau^2)\,\bI,
\end{align}
are the mean and covariance of the interpolant $\bx_\tau$ conditioned on $\bx_0$. Note that the second drift term involves the prior mean $\mean_1(\bx_0)=\bx_0$, not $\bar\mean_\tau(\bx_0)$; the two coincide only for schedules with $\alpha_\tau+\beta_\tau=1$.

The flow matching velocity is available in closed form for the same target. For linear schedules ($\sigpath_\tau = 1-\tau$, $\beta_\tau=\tau$) and $\mean_1(\bx_0)=\bx_0$ this reduces to
\begin{align}
    \fmvel(\bx, \bx_0, \tau) &= \bx_0 + \frac{2\tau-1}{\tau^2 + (1-\tau)^2}\,(\bx - \tau\bx_0).
\end{align}
The same velocity drives both the FM-ODE and, via the score identity \eqref{eq:fm_score}, the DM-SDE sampler with the endpoint-vanishing diffusion $\gdiff_\tau = \sqrt{\sigpath_\tau\beta_\tau}$.

The true posterior is
\begin{align}
    \conddist{\state^1}{\obs^1, \state^0} = \mathcal{N}\left(\state^0 + K (\obs^1 - \obsmatrix\state^0),\, \bI - K\obsmatrix \right),
\end{align}
where
\begin{align}
    K = \obsmatrix^\top (\obsmatrix\obsmatrix^\top + \obscov)^{-1} = \tfrac12\,\bI .
\end{align}

\subsection{Additional results}
\label{appendix:analytical_results}

Figure~\ref{fig:analytical_panels} visualises the case in two dimensions: the prior conditional, likelihood, exact posterior, and the posterior sampled with the closed-form drift. Table~\ref{tab:analytical_accuracy} reports, for every method, the KL divergence and the sliced-$W_1$ distance to the exact posterior at $M=50$ and $M=500$, alongside the wall-clock cost of one assimilation.

\begin{figure*}[t]
    \centering
    \begin{subfigure}{0.24\linewidth}\centering
        \includegraphics[width=\linewidth]{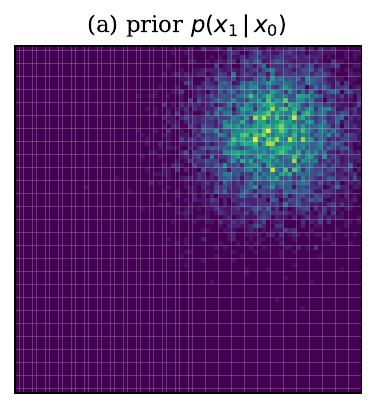}
        \caption{Prior conditional.}\label{fig:an_prior}
    \end{subfigure}\hfill
    \begin{subfigure}{0.24\linewidth}\centering
        \includegraphics[width=\linewidth]{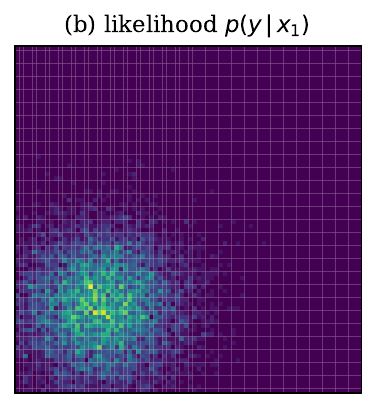}
        \caption{Likelihood.}\label{fig:an_like}
    \end{subfigure}\hfill
    \begin{subfigure}{0.24\linewidth}\centering
        \includegraphics[width=\linewidth]{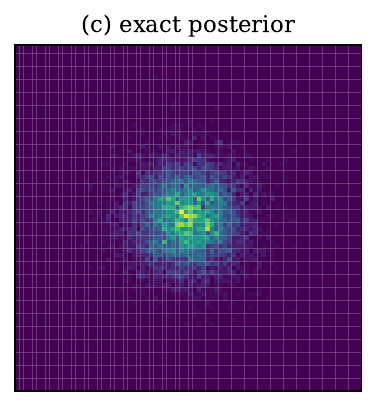}
        \caption{Exact posterior.}\label{fig:an_true}
    \end{subfigure}\hfill
    \begin{subfigure}{0.24\linewidth}\centering
        \includegraphics[width=\linewidth]{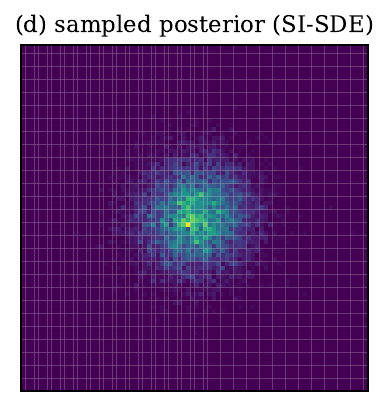}
        \caption{Sampled posterior (SI-SDE).}\label{fig:an_sampled}
    \end{subfigure}
    \caption{Analytical linear--Gaussian case.}
    \label{fig:analytical_panels}
\end{figure*}

\begin{table}[t]
    \centering
    \small
    \setlength{\tabcolsep}{4.5pt}
    \caption{Analytical case: Gaussian-fit KL divergence and sliced-$W_1$ to the exact posterior at $M=50$ and $M=500$ sampler steps (mean over five seeds, 100,000-sample ensembles), and wall-clock seconds per assimilation at $M=500$. The EnKF and particle filter perform a single analysis step (no $M$); both operate in their ideal linear--Gaussian regime here. A dash marks a collapsed ensemble with no finite KL.}
    \label{tab:analytical_accuracy}
    \begin{tabular}{l|cc|cc|c}
        \toprule
        & \multicolumn{2}{c|}{\textbf{Gaussian KL}} & \multicolumn{2}{c|}{\textbf{Sliced-$W_1$}} & \textbf{Cost} \\
        & $M{=}50$ & $M{=}500$ & $M{=}50$ & $M{=}500$ & s \\
        \midrule
        Ours (SI-SDE)                & 0.0004 & $6.8{\times}10^{-5}$ & 0.011 & 0.005 & 4.9 \\
        Ours (DM-SDE)                & 0.0012 & $8.2{\times}10^{-5}$ & 0.014 & 0.005 & 8.1 \\
        Ours (FM-ODE)                & 0.0018 & $8.0{\times}10^{-5}$ & 0.017 & 0.004 & 6.1 \\
        \midrule
        Ours (SI-SDE), Jac.-free     & 0.116  & 0.134  & 0.155 & 0.167 & 3.5 \\
        Ours (DM-SDE), Jac.-free     & 0.171  & 0.161  & 0.182 & 0.183 & 3.8 \\
        Ours (FM-ODE), Jac.-free     & 0.241  & 0.224  & 0.208 & 0.209 & 1.6 \\
        \midrule
        FlowDAS                      & 0.491  & 3.78   & 0.481 & 0.438 & 67.7 \\
        FlowDAS + SURGE              & $4.7{\times}10^{-5}$ & 87.8 & 0.005 & 0.649 & 69.1 \\
        SDA                          & 0.270  & 0.263  & 0.322 & 0.318 & 6.1 \\
        SDA + SURGE                  & 0.173  & 2.36   & 0.150 & 0.339 & 8.5 \\
        D-Flow SGLD                  & 0.096  & 56.1   & 0.190 & 0.628 & 9.9 \\
        Guided FM (FIG)              & 0.470  & --     & 0.354 & 0.654 & 1.9 \\
        \midrule
        EnKF                         & \multicolumn{2}{c|}{$7.3{\times}10^{-5}$} & \multicolumn{2}{c|}{0.005} & 0.01 \\
        Particle filter              & \multicolumn{2}{c|}{$6.4{\times}10^{-5}$} & \multicolumn{2}{c|}{0.004} & 0.01 \\
        \bottomrule
    \end{tabular}
\end{table}

\section{Navier--Stokes test case: setup and additional results}
\label{appendix:ns_case}

This appendix outlines the setup in greater detail and presents additional results: the data generation and forcing, qualitative reconstructions for each observation scenario, convergence in the number of sampler steps $M$, per-assimilation-step error curves, the enstrophy spectra of the reconstructions, and the ablations. Throughout: five held-out trajectories, $E=64$ ensemble members, 15 assimilation steps, observation noise $\obsnoise\sim\mathcal{N}(0,0.05^2\bI)$.

\subsection{Forcing and data generation}
\label{subsec:ns_data}

The stochastic forcing in Eq.~\eqref{eq:ns_equation} is temporally white and supplied to four Fourier modes,
\begin{equation}\label{eq:ns_forcing}
    \rd\xi(t,\boldsymbol{x})
    = \sum_{\boldsymbol{k}\in\mathcal{K}}
    \left[\cos(\boldsymbol{k}\cdot\boldsymbol{x})\,\rd W_t^{\boldsymbol{k},c}
    + \sin(\boldsymbol{k}\cdot\boldsymbol{x})\,\rd W_t^{\boldsymbol{k},s}\right],
    \qquad
    \mathcal{K}=\{(6,0),(7,0),(5,5),(8,8)\},
\end{equation}
with independent standard Wiener processes $W^{\boldsymbol{k},c}$ and $W^{\boldsymbol{k},s}$ and forcing amplitude $\varepsilon=1$. 

The trajectories are simulated with JAX-CFD~\citep{jax_cfd_2021} with a pseudo-spectral (Fourier) discretisation of Eq.~\eqref{eq:ns_equation} on a $256^2$ grid over the torus $\mathcal{T}^2=[0,2\pi]^2$, with viscosity $\nu=10^{-3}$ and linear drag $\alpha=0.1$. Time integration uses forward Euler--Maruyama on the vorticity Fourier coefficients with a fine step of $\rd t=10^{-4}$. Initial conditions are random divergence-free velocity fields with peak wavenumber $k=4$ and maximum velocity $7$.

Each trajectory is integrated for $500000$ time steps, corresponding to a time horizon of $T=50$, and stored every $0.5$ time units, i.e at every 5000 time step. This yields $100$ coarse snapshots per trajectory, of which the first $50$ are discarded so that the retained states are statistically spun up. The stored fields are finally down-sampled from $256^2$ to the $128^2$ resolution used for training and assimilation, so that the observation operators of Section~\ref{sec:results} act on a $128^2$ state.

\subsection{Ablations}
\label{subsec:results_ablations}
On the Navier--Stokes case, we study the computational approximations introduced in Section~\ref{subsec:approximations}. For the ensemble-shared covariance, we vary the Jacobian refresh rate $k$ to quantify the trade-off between accuracy and computational cost. Table~\ref{tab:ablation} collects the results.

\begin{table*}[ht]
    \centering
    \scriptsize
    \setlength{\tabcolsep}{3.5pt}
    \caption{Navier--Stokes ablation of the covariance approximation of Section~\ref{subsec:approximations}: the ensemble-shared inflated covariance at Jacobian refresh cadences $k\in\{1,5,10\}$ (the shared Jacobian is recomputed every $k$ of the $M$ sampler steps) against the Jacobian-free covariance. Same metrics and scenarios as Table~\ref{tab:ns_accuracy}, with per-step cost (lower is better). $E=64$ and $M=50$ sampler steps. \textbf{Bold} marks the best value in each column, together with every value equal to it to 2 decimal places in unscaled units.}
    \label{tab:ablation}
    \vspace{-10pt}
    \resizebox{\ifdim\width>\textwidth\textwidth\else\width\fi}{!}{%
    \begin{tabular}{l|cccc|cccc|cccc|cccc}
        \toprule
        & \multicolumn{4}{c|}{\textbf{Vorticity RMSE ($\times 10^{-1}$)}}
        & \multicolumn{4}{c|}{\textbf{CRPS ($\times 10^{-1}$)}}
        & \multicolumn{4}{c|}{\textbf{Spread--skill ($\times 10^{-1}$)}}
        & \multicolumn{4}{c}{\textbf{Cost (s/step)}} \\
        & $16^2$ & $32^2$ & $5\%$ & $\tfrac{1}{64}$ & $16^2$ & $32^2$ & $5\%$ & $\tfrac{1}{64}$ & $16^2$ & $32^2$ & $5\%$ & $\tfrac{1}{64}$ & $16^2$ & $32^2$ & $5\%$ & $\tfrac{1}{64}$ \\
        \midrule
        \multicolumn{17}{l}{\textit{Ensemble-shared inflated covariance -- Jacobian refreshed every step ($k=1$)}} \\
        Ours (SI-SDE) & 2.087 & 0.701 & \textbf{1.330} & \textbf{2.885} & 1.105 & 0.367 & \textbf{0.672} & \textbf{1.532} & 2.742 & 2.509 & 2.577 & 2.503 & 64.9 & 215.2 & 174.5 & 65.5 \\
        Ours (DM-SDE) & 2.248 & \textbf{0.640} & 1.470 & 4.548 & 1.178 & \textbf{0.334} & \textbf{0.740} & 2.480 & 1.572 & \textbf{0.637} & \textbf{1.079} & 1.929 & 134.2 & 442.4 & 360.3 & 135.2 \\
        Ours (FM-ODE) & \textbf{1.895} & \textbf{0.607} & 1.612 & 4.870 & \textbf{1.005} & \textbf{0.318} & 0.821 & 2.675 & \textbf{1.448} & 1.110 & 1.385 & 1.996 & 134.8 & 440.4 & 360.4 & 135.3 \\
        \midrule
        \multicolumn{17}{l}{\textit{Ensemble-shared inflated covariance -- Jacobian refreshed every $k=5$ steps}} \\
        Ours (SI-SDE) & 2.076 & 0.689 & 1.727 & 3.925 & 1.104 & 0.362 & 0.886 & 2.157 & 2.827 & 2.701 & 2.676 & 2.861 & 25.8 & 56.2 & 48.3 & 25.8 \\
        Ours (DM-SDE) & 2.166 & \textbf{0.642} & 3.125 & 6.621 & 1.157 & \textbf{0.335} & 1.656 & 3.723 & 1.490 & 0.796 & 1.639 & 2.111 & 53.5 & 115.9 & 99.7 & 53.5 \\
        Ours (FM-ODE) & 2.085 & \textbf{0.623} & 5.506 & 6.839 & 1.121 & \textbf{0.327} & 2.784 & 3.849 & 1.651 & 1.632 & 1.986 & 2.085 & 53.6 & 116.0 & 99.8 & 53.7 \\
        \midrule
        \multicolumn{17}{l}{\textit{Ensemble-shared inflated covariance -- Jacobian refreshed every $k=10$ steps}} \\
        Ours (SI-SDE) & 2.142 & 0.684 & 2.247 & 5.066 & 1.152 & 0.360 & 1.182 & 2.873 & 2.825 & 2.822 & 2.943 & 3.269 & 20.7 & 38.8 & 34.2 & 20.9 \\
        Ours (DM-SDE) & 2.344 & \textbf{0.649} & 4.389 & 7.339 & 1.261 & \textbf{0.339} & 2.387 & 4.138 & 1.557 & 0.921 & 1.909 & 1.988 & 43.6 & 82.1 & 72.2 & 44.3 \\
        Ours (FM-ODE) & 2.279 & \textbf{0.641} & 5.263 & 7.517 & 1.234 & \textbf{0.336} & 2.868 & 4.236 & 1.656 & 1.843 & 2.061 & 1.933 & 44.5 & 82.4 & 72.4 & 44.6 \\
        \midrule
        \multicolumn{17}{l}{\textit{Jacobian-free covariance}} \\
        Ours (SI-SDE) & 2.905 & 0.693 & 7.069 & 8.520 & 1.618 & 0.373 & 4.079 & 4.910 & 2.694 & 3.368 & 3.256 & 2.811 & \textbf{3.4} & \textbf{3.6} & \textbf{3.6} & \textbf{3.5} \\
        Ours (DM-SDE) & 2.845 & 0.679 & 6.805 & 8.370 & 1.553 & 0.354 & 3.804 & 4.705 & 1.623 & 1.209 & 2.172 & \textbf{1.539} & 6.5 & 6.6 & 6.6 & 6.5 \\
        Ours (FM-ODE) & 2.817 & 0.694 & 7.076 & 8.471 & 1.545 & 0.365 & 3.948 & 4.756 & 1.670 & 2.022 & 2.114 & \textbf{1.478} & 6.5 & 6.6 & 6.6 & 6.5 \\
        \bottomrule
    \end{tabular}}
  \end{table*}

\newcommand{\nspanel}[1]{%
  {\includegraphics[width=\linewidth]{figures/navier_stokes/singles/#1.pdf}}%
}
\newcommand{\nsqualpanel}[1]{%
  {\includegraphics[height=0.116\textheight]{figures/navier_stokes/singles/#1.pdf}}%
}
\newcommand{\nscbar}[1]{%
  \includegraphics[height=0.116\textheight]{figures/navier_stokes/singles/#1.pdf}%
}
\newcommand{\nsfourrow}[8]{%
  \raisebox{0.053\textheight}{\rotatebox[origin=c]{90}{\textbf{\scriptsize #3}}} &
  \nsqualpanel{ns_states_#1__#2__#4} &
  \nsqualpanel{ns_states_#1__#2__#5} &
  \nsqualpanel{ns_states_#1__#2__#6} &
  \nsqualpanel{ns_states_#1__#2__#7} &
  \nscbar{ns_states_#1__cbar_#8} \\
}
\newcommand{\nssettingfigure}[3]{%
\begin{figure*}[p]
    \centering
    \captionsetup{font=small}
    \setlength{\tabcolsep}{1.8pt}
    \renewcommand{\arraystretch}{0.78}
    \begin{tabular}{@{}rccccc@{}}
      & \multicolumn{2}{c}{\nsqualpanel{ns_states_#1__truth}} &
        \multicolumn{2}{c}{\nsqualpanel{ns_obs_#1}} & \\
      & \multicolumn{2}{c}{\textbf{\scriptsize True state}} &
        \multicolumn{2}{c}{\textbf{\scriptsize Observations}} & \\[3pt]
      \textbf{\scriptsize Ours} &
        \textbf{\scriptsize SI-SDE (shared)} &
        \textbf{\scriptsize FM-ODE (shared)} &
        \textbf{\scriptsize SI-SDE (JF)} &
        \textbf{\scriptsize FM-ODE (JF)} & \\
      \nsfourrow{#1}{mean}{State}
        {ours_si_sde_shared}{ours_fm_ode_shared}
        {ours_si_sde_jac_free}{ours_fm_ode_jac_free}{field}
      \nsfourrow{#1}{std}{Standard deviation}
        {ours_si_sde_shared}{ours_fm_ode_shared}
        {ours_si_sde_jac_free}{ours_fm_ode_jac_free}{std}
      \nsfourrow{#1}{abserr}{Absolute error}
        {ours_si_sde_shared}{ours_fm_ode_shared}
        {ours_si_sde_jac_free}{ours_fm_ode_jac_free}{abserr}
      \noalign{\vskip 4pt}
      \textbf{\scriptsize Baselines} &
        \textbf{\scriptsize FlowDAS+SURGE} &
        \textbf{\scriptsize SDA+SURGE} &
        \textbf{\scriptsize D-Flow SGLD} &
        \textbf{\scriptsize Guided FM (FIG)} & \\
      \nsfourrow{#1}{mean}{State}
        {flowdas_surge}{sda_surge}{d_flow_sgld}{guided_fm_fig}{field}
      \nsfourrow{#1}{std}{Standard deviation}
        {flowdas_surge}{sda_surge}{d_flow_sgld}{guided_fm_fig}{std}
      \nsfourrow{#1}{abserr}{Absolute error}
        {flowdas_surge}{sda_surge}{d_flow_sgld}{guided_fm_fig}{abserr}
    \end{tabular}
    \caption{Navier--Stokes, #2: true vorticity and observations (top), followed by the posterior-mean state, ensemble standard deviation, and absolute error for our methods and the retained baselines at the final assimilation step. ``Shared'' denotes the ensemble-shared covariance and ``JF'' the Jacobian-free covariance.}
    \label{#3}
\end{figure*}%
}

\nssettingfigure{32_2_128_2}{super-resolution $32^2\!\to\!128^2$}{fig:ns_fields_32}
\nssettingfigure{16_2_128_2}{super-resolution $16^2\!\to\!128^2$}{fig:ns_fields_16}
\nssettingfigure{sparse_5}{sparse sensors ($5\%$)}{fig:ns_fields_sparse5}
\nssettingfigure{sparse_1_5625}{sparse sensors ($1.5625\%$)}{fig:ns_fields_sparse1p5}

\newcommand{\nscurvepanel}[3]{%
    \begin{subfigure}[t]{0.42\linewidth}\centering
        \includegraphics[height=0.155\textheight]{figures/navier_stokes/singles/ns_#1_#2.pdf}
        \vspace{2pt}
        \caption{#3}
    \end{subfigure}%
}
\newcommand{\nsmetricfigure}[3]{%
\begin{figure*}[p]
    \centering
    \captionsetup{font=small}
    \captionsetup[subfigure]{skip=4pt}
    \textbf{\small Versus sampler steps $M$}\par\vspace{2pt}
    \nscurvepanel{#1_vs_M}{32_2_128_2}{Super-resolution $32^2\!\to\!128^2$.}
    \hspace{0.04\linewidth}
    \nscurvepanel{#1_vs_M}{16_2_128_2}{Super-resolution $16^2\!\to\!128^2$.}
    \par\vspace{4pt}
    \nscurvepanel{#1_vs_M}{sparse_5}{Sparse sensors ($5\%$).}
    \hspace{0.04\linewidth}
    \nscurvepanel{#1_vs_M}{sparse_1_5625}{Sparse sensors ($1.5625\%$).}
    \par\vspace{8pt}
    \textbf{\small Versus assimilation step}\par\vspace{2pt}
    \nscurvepanel{#1_vs_step}{32_2_128_2}{Super-resolution $32^2\!\to\!128^2$.}
    \hspace{0.04\linewidth}
    \nscurvepanel{#1_vs_step}{16_2_128_2}{Super-resolution $16^2\!\to\!128^2$.}
    \par\vspace{4pt}
    \nscurvepanel{#1_vs_step}{sparse_5}{Sparse sensors ($5\%$).}
    \hspace{0.04\linewidth}
    \nscurvepanel{#1_vs_step}{sparse_1_5625}{Sparse sensors ($1.5625\%$).}
    \par\vspace{4pt}
    \includegraphics[width=0.8\linewidth]{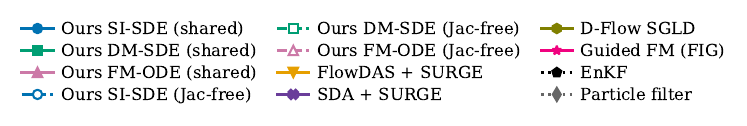}
    \vspace{-5pt}
    \caption{Navier--Stokes: #2 as a function of the number of sampler steps $M$ (top two rows) and assimilation step at the largest $M$ (bottom two rows), for all observation scenarios.}
    \label{#3}
\end{figure*}%
}

\nsmetricfigure{rmse}{ensemble-mean vorticity RMSE}{fig:ns_rmse_curves}
\nsmetricfigure{crps}{CRPS}{fig:ns_crps_curves}
\nsmetricfigure{spread_skill}{Spread-skill}{fig:ns_spread_skill_curves}

\subsection{Enstrophy spectra}
\label{subsec:ns_spectra}

Pointwise metrics do not reveal which scales are misrepresented. Figure~\ref{fig:ns_enstrophy_spectra} therefore compares the radially averaged enstrophy spectrum of each posterior mean with the truth:
\begin{equation}\label{eq:ns_enstrophy_spectrum}
Z(k)=\left\langle \tfrac12|\hat{\vorticity}(\boldsymbol{k})|^2 \right\rangle_{|\boldsymbol{k}|\in\mathrm{shell}(k)},
\end{equation}
using 60 radial bins and averaging over the 15 assimilated steps. This diagnostic is computed directly from vorticity.

\newcommand{\nsspectrumpanel}[3]{%
    \begin{subfigure}[t]{0.42\linewidth}\centering
        {\includegraphics[height=0.155\textheight]{figures/navier_stokes/singles/ns_enstrophy_spectrum#1_#2.pdf}}%
        \vspace{2pt}
        \caption{#3}
    \end{subfigure}%
}
\newcommand{\nsspectrumrows}[1]{%
    \nsspectrumpanel{#1}{32_2_128_2}{Super-resolution $32^2\!\to\!128^2$.}
    \hspace{0.04\linewidth}
    \nsspectrumpanel{#1}{16_2_128_2}{Super-resolution $16^2\!\to\!128^2$.}
    \par\vspace{4pt}
    \nsspectrumpanel{#1}{sparse_5}{Sparse sensors ($5\%$).}
    \hspace{0.04\linewidth}
    \nsspectrumpanel{#1}{sparse_1_5625}{Sparse sensors ($1.5625\%$).}%
}
\begin{figure*}[p]
    \centering
    \captionsetup{font=small}
    \captionsetup[subfigure]{skip=4pt}
    \textbf{\small Full spectrum}\par\vspace{2pt}
    \nsspectrumrows{}
    \par\vspace{8pt}
    \textbf{\small High-wavenumber detail ($k\ge30$)}\par\vspace{2pt}
    \nsspectrumrows{_zoom}
    \par\vspace{4pt}
    \includegraphics[width=0.8\linewidth]{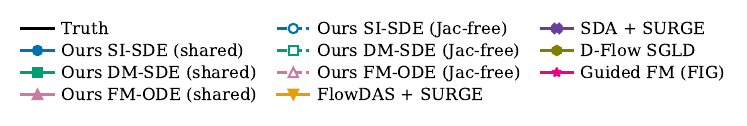}
    \vspace{-5pt}
    \caption{Navier--Stokes: radially-averaged enstrophy spectra $Z(k)$ of the posterior mean for every method (colours) against the true spectrum (black), for all four observation scenarios. Top two rows: the full wavenumber range. Bottom two rows: the same panels zoomed to $k\ge30$, where the methods separate -- a curve below the truth has an over-smoothed posterior mean, one above it carries spurious small-scale enstrophy. Spectra are averaged over the 15 assimilated steps of one held-out trajectory at the largest sampler-step count $M$. A method with no saved run at this $M$, or whose posterior diverged there, carries no curve.}
    \label{fig:ns_enstrophy_spectra}
\end{figure*}

\section{Urban airflow test case: setup and additional results}
\label{appendix:urban_case}

This appendix provides additional information and results for the urban-airflow case, including qualitative reconstructions and error trajectories over 145 forecast steps. The reported run uses one held-out trajectory, $E=64$ ensemble members, $M=50$ sampler steps, and sensor densities of $1.5625\%$ and $0.78125\%$ of fluid cells. Starting from a different held-out flow state, observations are assimilated every third forecast step, for 49 posterior updates. Metrics are evaluated at every step over fluid cells only, with buildings masked. Due to computational complexities, this case includes only solver-free generative methods; no conventional-filter or KL reference is reported.

\paragraph{The uDALES solver.}
The data are generated with uDALES~\citep{suter_udales_2022}, an open-source, building-resolving large-eddy simulation code for urban microclimate and air quality derived from DALES. It solves the filtered incompressible Navier--Stokes and scalar-transport equations on a staggered Cartesian grid, using the Vreman model for subgrid stresses and an immersed-boundary representation of buildings. Momentum, temperature, and building-surface energy are coupled within a periodic $128\times128\times128\,\mathrm{m}^3$ domain discretized at $1\,\mathrm{m}$ resolution.

\newcommand{\urbqualpanel}[1]{%
  {\includegraphics[height=0.116\textheight]{figures/urban/singles/#1.pdf}}%
}
\newcommand{\urbqualcbar}[1]{%
  {\includegraphics[height=0.116\textheight]{figures/urban/singles/#1.pdf}}%
}
\newcommand{\urbfourrow}[9]{%
  \raisebox{0.053\textheight}{\rotatebox[origin=c]{90}{\textbf{\scriptsize #4}}} &
  \urbqualpanel{urban_states_#1_#2__#3__#5} &
  \urbqualpanel{urban_states_#1_#2__#3__#6} &
  \urbqualpanel{urban_states_#1_#2__#3__#7} &
  \urbqualpanel{urban_states_#1_#2__#3__#8} &
  \urbqualcbar{urban_states_#1_#2__cbar_#9} \\
}
\newcommand{\urbsettingfigure}[5]{%
\begin{figure*}[p]
    \centering
    \captionsetup{font=small}
    \setlength{\tabcolsep}{1.8pt}
    \renewcommand{\arraystretch}{0.78}
    \begin{tabular}{@{}rccccc@{}}
      & \multicolumn{2}{c}{\urbqualpanel{urban_truth_#1}} &
        \multicolumn{2}{c}{\urbqualpanel{urban_obs_#2}} & \\
      & \multicolumn{2}{c}{\textbf{\scriptsize True state}} &
        \multicolumn{2}{c}{\textbf{\scriptsize Observations}} & \\[3pt]
      \textbf{\scriptsize Ours} &
        \textbf{\scriptsize SI-SDE (shared)} &
        \textbf{\scriptsize FM-ODE (shared)} &
        \textbf{\scriptsize SI-SDE (JF)} &
        \textbf{\scriptsize FM-ODE (JF)} & \\
      \urbfourrow{#1}{#2}{mean}{State}
        {ours_si_sde_shared}{ours_fm_ode_shared}
        {ours_si_sde_jac_free}{ours_fm_ode_jac_free}{field}
      \urbfourrow{#1}{#2}{std}{Standard deviation}
        {ours_si_sde_shared}{ours_fm_ode_shared}
        {ours_si_sde_jac_free}{ours_fm_ode_jac_free}{std}
      \urbfourrow{#1}{#2}{abserr}{Absolute error}
        {ours_si_sde_shared}{ours_fm_ode_shared}
        {ours_si_sde_jac_free}{ours_fm_ode_jac_free}{abserr}
      \noalign{\vskip 4pt}
      \textbf{\scriptsize Baselines} &
        \textbf{\scriptsize FlowDAS+SURGE} &
        \textbf{\scriptsize SDA+SURGE} &
        \textbf{\scriptsize D-Flow SGLD} &
        \textbf{\scriptsize Guided FM (FIG)} & \\
      \urbfourrow{#1}{#2}{mean}{State}
        {flowdas_surge}{sda_surge}{d_flow_sgld}{guided_fm_fig}{field}
      \urbfourrow{#1}{#2}{std}{Standard deviation}
        {flowdas_surge}{sda_surge}{d_flow_sgld}{guided_fm_fig}{std}
      \urbfourrow{#1}{#2}{abserr}{Absolute error}
        {flowdas_surge}{sda_surge}{d_flow_sgld}{guided_fm_fig}{abserr}
    \end{tabular}
    \caption{Urban airflow, #4, #3: true state and velocity observations (top), followed by the posterior-mean state, ensemble standard deviation, and absolute error for our methods and the retained baselines at the final forecast step. ``Shared'' denotes the ensemble-shared covariance and ``JF'' the Jacobian-free covariance; solid cells are masked.}
    \label{#5}
\end{figure*}%
}

\paragraph{Qualitative reconstructions.}
Figures~\ref{fig:urban_fields_vel_0p7}--\ref{fig:urban_fields_temp_1p5} show the true field, observations, reconstruction, uncertainty, and error for each retained method. Each figure corresponds to one variable and sensor density. As in the Navier--Stokes figures, we omit the DM-SDE variants and show only the SURGE versions of FlowDAS and SDA.

\urbsettingfigure{velocity}{sparse_0_78125}{sparse sensors ($0.78125\%$)}{velocity magnitude}{fig:urban_fields_vel_0p7}
\urbsettingfigure{temperature}{sparse_0_78125}{sparse sensors ($0.78125\%$)}{potential temperature $\theta_\ell$}{fig:urban_fields_temp_0p7}
\urbsettingfigure{velocity}{sparse_1_5625}{sparse sensors ($1.5625\%$)}{velocity magnitude}{fig:urban_fields_vel_1p5}
\urbsettingfigure{temperature}{sparse_1_5625}{sparse sensors ($1.5625\%$)}{potential temperature $\theta_\ell$}{fig:urban_fields_temp_1p5}

\newcommand{\urbcurve}[1]{%
  {\includegraphics[width=\linewidth,height=0.155\textheight,keepaspectratio]{figures/urban/singles/#1.pdf}}%
}
\newcommand{\urblegend}{%
  {\includegraphics[width=0.8\linewidth]{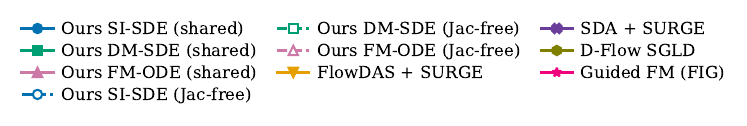}}%
}

\begin{figure*}[ht]
    \centering
    \begin{subfigure}[t]{0.4\linewidth}\centering
        \urbcurve{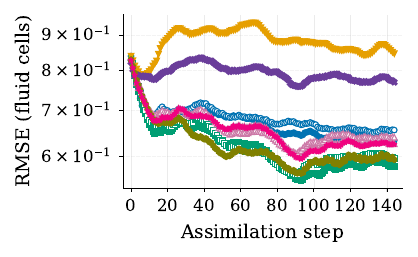}
        \vspace{-7pt}
        \caption{Velocity, sparse $0.78125\%$.}
    \end{subfigure}\hspace{0.04\linewidth}
    \begin{subfigure}[t]{0.4\linewidth}\centering
        \urbcurve{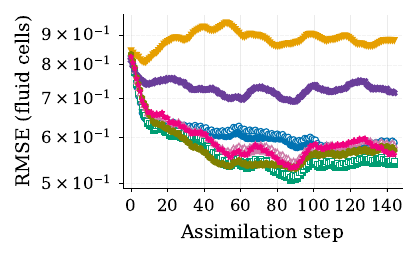}
        \vspace{-7pt}
        \caption{Velocity, sparse $1.5625\%$.}
    \end{subfigure}\hspace{0.04\linewidth}
    \begin{subfigure}[t]{0.4\linewidth}\centering
        \urbcurve{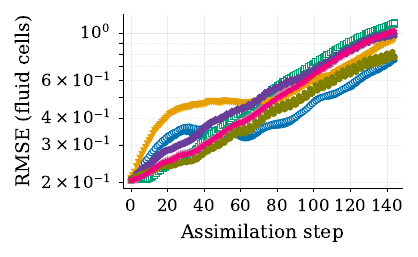}
        \vspace{-7pt}
        \caption{Temperature $\theta_\ell$, sparse $0.78125\%$.}
    \end{subfigure}\hspace{0.04\linewidth}
    \begin{subfigure}[t]{0.4\linewidth}\centering
        \urbcurve{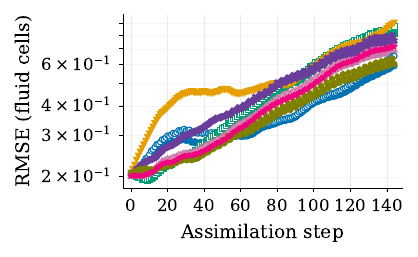}
        \vspace{-7pt}
        \caption{Temperature $\theta_\ell$, sparse $1.5625\%$.}
    \end{subfigure}
    \urblegend
    \vspace{-10pt}
    \caption{Urban airflow: RMSE (fluid cells) vs.\ forecast step for velocity and temperature at both sensor densities; observations are assimilated every third step.}
    \label{fig:urban_rmse_vs_step}
\end{figure*}

\begin{figure*}[ht]
    \centering
    \begin{subfigure}[t]{0.4\linewidth}\centering
        \urbcurve{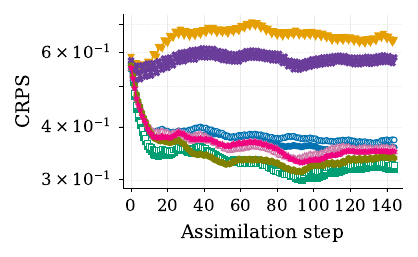}
        \vspace{-7pt}
        \caption{Velocity, sparse $0.78125\%$.}
    \end{subfigure}\hspace{0.04\linewidth}
    \begin{subfigure}[t]{0.4\linewidth}\centering
        \urbcurve{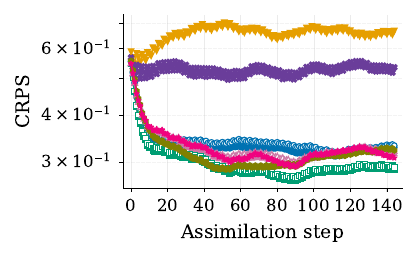}
        \vspace{-7pt}
        \caption{Velocity, sparse $1.5625\%$.}
    \end{subfigure}\hspace{0.04\linewidth}
    \begin{subfigure}[t]{0.4\linewidth}\centering
        \urbcurve{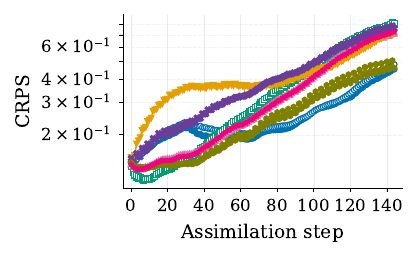}
        \vspace{-7pt}
        \caption{Temperature $\theta_\ell$, sparse $0.78125\%$.}
    \end{subfigure}\hspace{0.04\linewidth}
    \begin{subfigure}[t]{0.4\linewidth}\centering
        \urbcurve{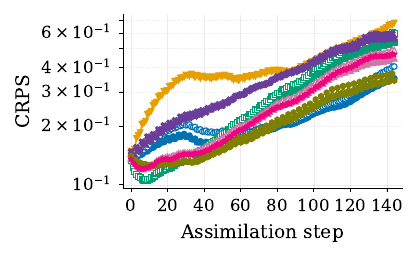}
        \vspace{-7pt}
        \caption{Temperature $\theta_\ell$, sparse $1.5625\%$.}
    \end{subfigure}
    \urblegend
    \vspace{-10pt}
    \caption{Urban airflow: CRPS vs.\ forecast step for velocity and temperature at both sensor densities; observations are assimilated every third step.}
    \label{fig:urban_crps_vs_step}
\end{figure*}

\begin{figure*}[ht]
    \centering
    \begin{subfigure}[t]{0.4\linewidth}\centering
        \urbcurve{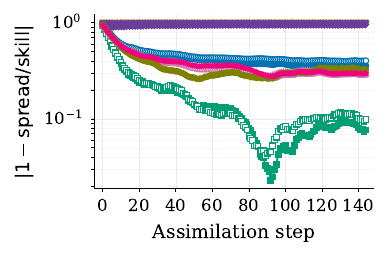}
        \vspace{-7pt}
        \caption{Velocity, sparse $0.78125\%$.}
    \end{subfigure}\hspace{0.04\linewidth}
    \begin{subfigure}[t]{0.4\linewidth}\centering
        \urbcurve{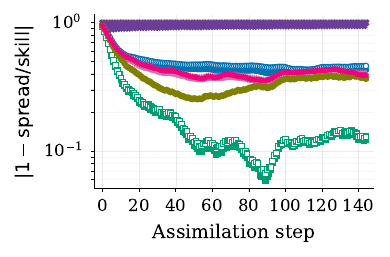}
        \vspace{-7pt}
        \caption{Velocity, sparse $1.5625\%$.}
    \end{subfigure}\hspace{0.04\linewidth}
    \begin{subfigure}[t]{0.4\linewidth}\centering
        \urbcurve{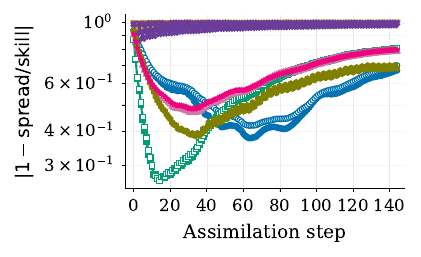}
        \vspace{-7pt}
        \caption{Temperature $\theta_\ell$, sparse $0.78125\%$.}
    \end{subfigure}\hspace{0.04\linewidth}
    \begin{subfigure}[t]{0.4\linewidth}\centering
        \urbcurve{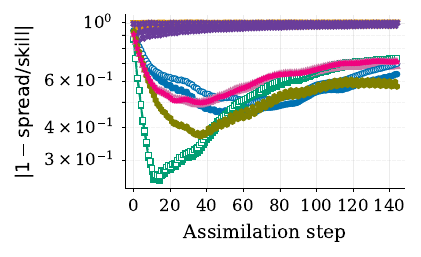}
        \vspace{-7pt}
        \caption{Temperature $\theta_\ell$, sparse $1.5625\%$.}
    \end{subfigure}
    \urblegend
    \vspace{-10pt}
    \caption{Urban airflow: spread--skill deviation vs.\ forecast step for velocity and temperature at both sensor densities; observations are assimilated every third step.}
    \label{fig:urban_ss_vs_step}
\end{figure*}

\end{document}